\definecolor{myedit}{rgb}{1.0,0,0}
\definecolor{mytodo}{rgb}{0,0.0,1.0}
\definecolor{lightgray}{rgb}{0.8, 0.8, 0.8}
\newtheorem{prop}{Proposition}
\newcommand{\M}[1]{\underset{#1}{\text{minimize}}\;}
\newcommand{\mobility}[0]{$\tilde{\mathcal{T}}_{episode}$ & $\Delta\tau $ (m)      
                    }
\newcommand{\conservatism}[0]
                    {$\mathcal{F}$ (\%) 
                    & $\bar d_{min}$ (m)}
\newcommand{\comfort}[0]{$\tilde{\mathcal{A}}_{lat}$ 
                        & $\bar{\mathcal{J}}_{long}$ ($\frac{m}{s^3}$)
                        & $\bar{\mathcal{J}}_{lat}$ ($\frac{m}{s^3}$)
                        }
\newcommand{\efficiency}[0]{$\bar{\mathcal{T}}_{solve}$ (ms)}
\title{\LARGE \bf 
Predictive Control for Autonomous Driving with Uncertain, Multi-modal Predictions
}
\author{Siddharth H. Nair$^{\star}$, Hotae Lee$^{\star}$, Eunhyek Joa$^{\star}$,  Yan Wang, H. Eric Tseng, Francesco Borrelli
\thanks{$^\star{}$Indicates equal contribution.
SHN, HL, EJ and FB are with the Model Predictive Control Laboratory, UC Berkeley. 
YW, HET are with Ford Research and Advanced Engineering. E-mails for correspondence:
\{siddharth\_nair, hotae.lee, e.joa, fborrelli\}@berkeley.edu.}
}
\begin{document}

\maketitle
\thispagestyle{empty}
\pagestyle{empty}
\begin{abstract}
We propose a Stochastic MPC (SMPC) formulation for path planning with autonomous vehicles in scenarios involving multiple agents with multi-modal predictions. The multi-modal predictions capture the uncertainty of urban driving in distinct modes/maneuvers (e.g., yield, keep speed) and driving trajectories (e.g., speed, turning radius), which are incorporated for multi-modal collision avoidance chance constraints for path planning. In the presence of multi-modal uncertainties, it is challenging to reliably compute feasible path planning solutions at real-time frequencies ({\small{$\geq 10 Hz$}}). Our main technological contribution is a convex SMPC formulation that simultaneously (1) optimizes over parameterized feedback policies and (2) allocates risk levels for each mode of the prediction. The use of feedback policies and risk allocation enhances the feasibility and performance of the SMPC formulation against multi-modal predictions with large uncertainty.  We evaluate our approach via simulations and road experiments with a full-scale vehicle interacting in closed-loop with virtual vehicles. We consider distinct, multi-modal driving scenarios: 1) Negotiating a traffic light and a fast, tailgating agent, 2) Executing an unprotected left turn at a traffic intersection, and 3) Changing lanes in the presence of multiple agents. For all of these scenarios, our approach reliably computes multi-modal solutions to the path-planning problem at real-time frequencies.  

\end{abstract}

\section{Introduction}
\label{sec:introduction}
\subsection*{Motivation}

Autonomous vehicle technologies have seen a surge in popularity over the last decade, with the potential to improve the flow of traffic, safety, and fuel efficiency \cite{nhtsa}.
In the upcoming decade, we can expect a rise in the number of autonomous vehicles, making it increasingly common for them to navigate roads with mixed traffic with vehicles of varying automation levels \cite{mcksy}.
While existing technology is being gradually introduced into scenarios such as highway driving \cite{chae2021design} and low-speed parking \cite{zhang2018autonomous} where other road users' intents are relatively easy to infer, autonomous driving in mixed traffic scenarios such as urban road driving and merging is an open challenge because of the
variability in the possible behaviors of the surrounding agents \cite{nhtsa_intersection}, \cite{wei2021autonomous}. 
To address this difficulty, significant research has been devoted to modeling these agent predictions as multi-modal distributions \cite{imm_bar_shalom_1988, multipath_2019, trajectron_2020}. Such models capture uncertainty in both high-level decisions (desired route) and low-level executions (agent position, heading, speed).\\
\begin{figure}[h]
    \centering
    \begin{subfigure}{0.8\columnwidth}
    \centering
    \includegraphics[width=\columnwidth]{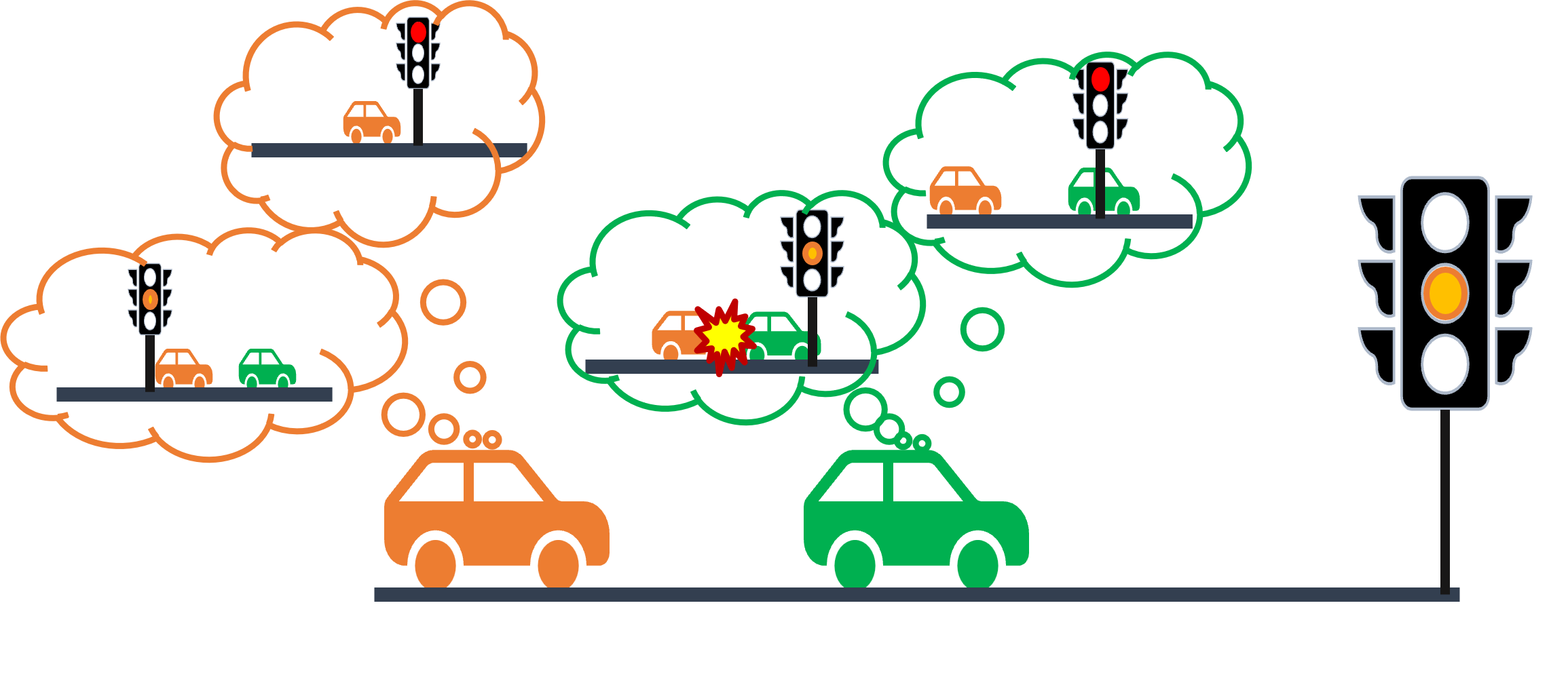}
    \caption{{\small{The EV (green) must decide whether to cross a Traffic Light (TL) before it turns red or come to a stop, while managing the headway for the tailgating TV behind. The discrete modes are given as $\{$TL goes red, TL stays yellow$\}\times\{$TV stops for TL, TV doesn't stop for TL$\}$. Inspired by the dashcam footage: {\footnotesize{\url{https://youtu.be/i3pvrpKDjRQ}}}.}}}\label{fig:TL}
    \end{subfigure} \hfill %
    \begin{subfigure}{0.8\columnwidth}
    \centering
    \includegraphics[width=\columnwidth]{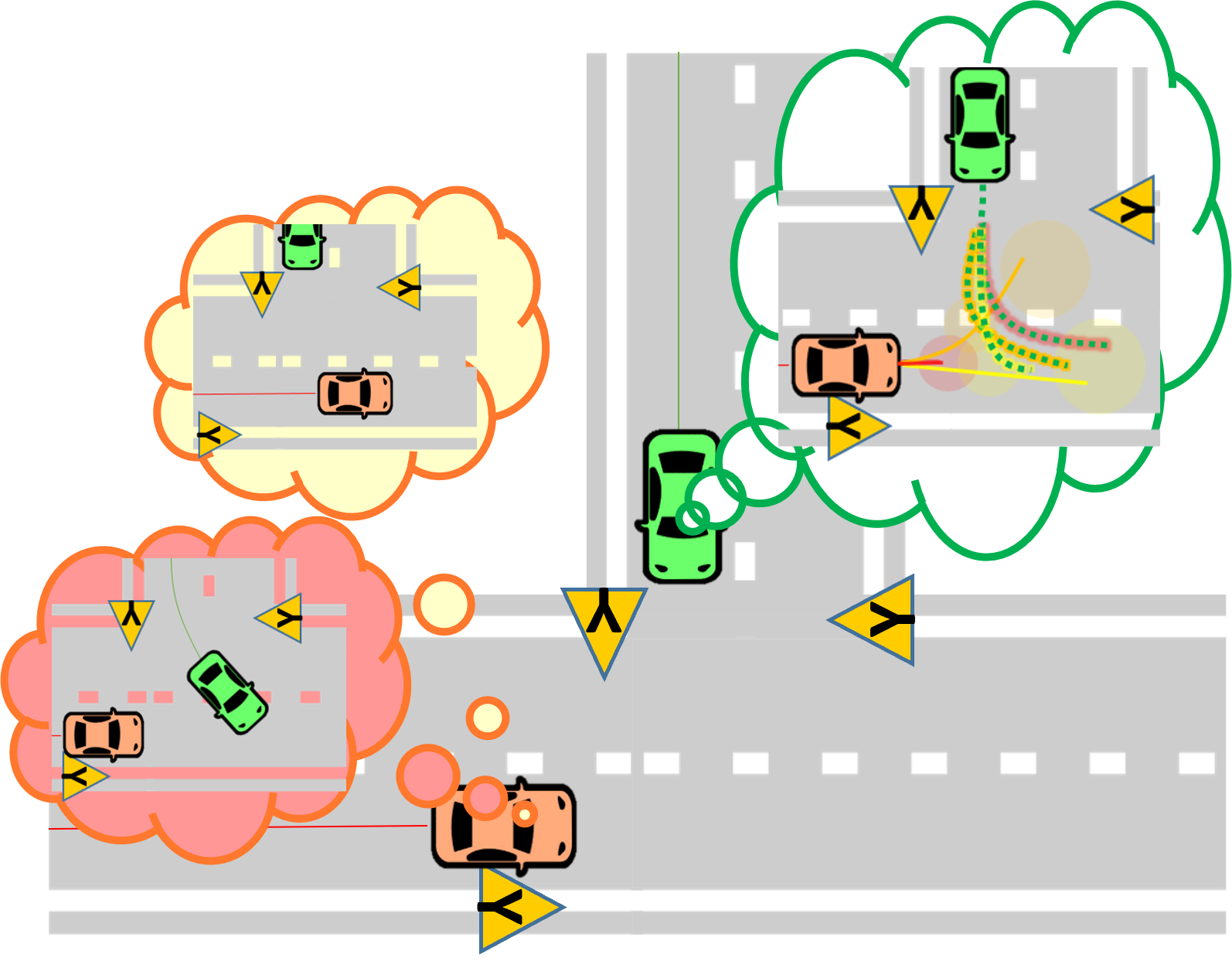}
    \caption{{\small{The EV (green) must find a feasible solution to make a left turn in the presence of an oncoming TV while accounting for its multi-modal behavior: $\{${\color{red}{give EV right-of-way}}, {\color{yellow}{go straight}} and {\color{orange}{turn left}}$\}$. The EV computes a \textit{policy tree} to address the multi-modal uncertainty. Inspired by the footage: {\footnotesize{\url{https://shorturl.at/aeJ12}}}. }}}\label{fig:UL} 
    \end{subfigure}
    \caption{{\small{Multi-modal planning in urban driving.
}}}   
    \vskip -0.9cm
\end{figure}

The focus of this work is to incorporate these multi-modal distributions for the surrounding agents (called Target Vehicles, or TVs) into a planning framework for the autonomous agent (called Ego Vehicle, or EV). We investigate the planning problem in the context of constrained optimal control and use Model Predictive Control (MPC) for computing feedback control policies. The main challenge in designing MPC for effectively addressing the multi-modal predictions is to find a good balance between \textit{performance}, \textit{safety}, and \textit{computational efficiency}. Consider the situation in Fig.~\ref{fig:TL}, where the EV is approaching a traffic light with a tailgating TV behind. A \textit{performant} MPC design would enable the EV to assess the risk associated with the multi-modalities of the TV and traffic light along the planning horizon so that the EV is able to cross the yellow light or stop at the red light. For ensuring \textit{safety}, the EV must also manage a safe distance ahead of the TV despite the uncertain predictions. A conservative MPC design would either fail to find a feasible solution in the presence of large uncertainty, or sacrifice performance for safety by always choosing to stop. Prior works \cite{batkovic2020robust, chen2022interactive, nair2022stochastic} show that planning using trees or feedback policies over the multi-modal distribution is effective for reliably finding high-quality solutions. However, optimization over policies is infinite-dimensional in general and hence, \textit{computationally expensive} for real-time control. In this work, we propose a Stochastic Model Predictive Control (SMPC) framework that incorporates multi-modal predictions of agents to enforce probabilistic collision avoidance and state-input constraints. 

\subsection*{Contributions}
Our main contributions are summarised as follows. First, we propose a convex formulation for Stochastic MPC that optimizes over tree-structured feedback policies for multi-modal predictions specified as Gaussian Mixture Models (GMMs). The policy parameterization is designed to receive feedback over both discrete modes and continuous observations of the TVs' states. Our formulation also includes a novel multi-modal chance constraint reformulation that simultaneously allocates risk levels for the various modes based on their probabilities. Second, we evaluate our approach in various autonomous driving scenarios via simulations and hardware experiments. We demonstrate our SMPC via a hardware experiment for a lane change scenario, characterized by the presence of two TVs with uncertain, bi-modal predictions. Our findings indicate that the proposed approach exhibits a significant reduction in conservatism when compared to the conventional approaches that optimize over open-loop control sequences. Additionally, we show adaptability to variable probabilities of the modes of the TVs.

\subsection*{Related work}
There is a large body of work focusing on the application of SMPC to autonomous driving for motion planning \cite{planning_and_dm_for_avs_2018, brudigam2021stochastic, rosolia2018data}. A typical SMPC algorithm involves a finite horizon stochastic optimal control problem that is solved as a chance-constrained optimization problem in receding horizon fashion \cite{mesbah2016stochastic}. 
In the context of motion planning, SMPC has been used for imposing chance-constrained collision avoidance to account for uncertainty in vehicles' predictions in applications such as autonomous lane change \cite{carvalho2014stochastic, gray2013stochastic}, cruise control \cite{moser2017flexible} and platooning \cite{causevic2020information}. Besides the modeling aspects specific to the applications, SMPC formulations differ in the uncertainty descriptions and the deterministic reformulations of the chance constraints. Gaussian distributions are a common modeling choice for the uncertainty owing to the invariance to affine transformations and closed-form expressions for affine chance constraints \cite{brudigam2021stochastic, brudigam2021gaussian}. For arbitrary distributions that can be sampled efficiently, randomized approaches \cite{calafiore2006scenario} are used to obtain high-confidence reformulations of the chance constraints \cite{de2021scenario}. Distributionally robust formulations are also becoming increasingly popular to improve robustness to distributional assumptions \cite{wei2022moving, nair2022collision, schuurmans2023safe}, by robustifying the chance constraint against a set of distributions. For urban driving scenarios where the surrounding agents' predictions are multi-modal and temporally linked, mixture models \cite{zhou2018joint, wang2020non, ren2022chance, nair2022savec, zhou2023interaction} and scenario trees \cite{batkovic2020robust, chen2022interactive, oliveira2023interaction} offer convenient structure that can be exploited in the SMPC.\\

To enhance the feasibility of the SMPC optimization problem, the scenario-tree-based MPC formulations \cite{batkovic2020robust, chen2022interactive, oliveira2023interaction} optimize over policies along the prediction horizon. The policies inherit a tree structure from the scenario tree, to encode feedback over the uncertainty realization. This adds flexibility to find feasible solutions due to the ability to react to different realizations of the vehicles' trajectory predictions along the prediction horizon. Mixture models like Gaussian Mixture Models (GMMs), are more memory-efficient representations for the multi-modal uncertainty by using discrete random variables to capture distinct modalities and continuous variables to capture the spread within each mode. In order to handle multi-modal predictions (specifically GMMs), the works \cite{zhou2018joint}, \cite{wang2020non} propose nonlinear SMPC algorithms that suitably reformulate the collision avoidance chance constraint for all possible modes. However, a non-convex optimization problem is formulated to find a single open-loop input sequence that satisfies the collision avoidance chance constraints for all modes and possible evolutions of the target vehicles over the prediction horizon given by the GMM. To remedy the conservativism of this approach, we proposed a convex SMPC formulation \cite{nair2022stochastic} that optimizes over a parametric, mixed-feedback policy architecture to enhance the feasibility of the SMPC optimization problem. Inspired by scenario trees, our policies use feedback from the discrete mode realization but also incorporate feedback from the continuous realizations of the agents' positions. \\

In this work, we build on our SMPC formulation \cite{nair2022stochastic}  in two directions. First, we generalize our policy parameterization for multi-modal predictions with arbitrary rooted tree structure. Our earlier approach assumed a simplistic tree structure with a single branching point. Second, we derive a convex reformulation of the multi-modal chance constraints to simultaneously allocate risk levels for each mode and find feasible policy parameters. Compared to our previous approach, this further enhances the feasibility of the SMPC optimization problem by exploiting the varying probabilities of the discrete modes to allocate risk levels for each mode to satisfy the multi-modal chance constraint. In the context of SMPC, risk allocation has been studied for joint chance constraints \cite{ono2008iterative, sivaramakrishnan2020convexified}. In \cite{ono2008iterative}, the risk levels are iteratively optimized by alternating optimization of risk levels and SMPC optimal control problem, whereas \cite{sivaramakrishnan2020convexified} proposes a convex formulation for simultaneous risk allocation and optimization over open-loop control sequences. The simultaneous optimization in \cite{sivaramakrishnan2020convexified} overcomes the computational overhead of the iterative optimization approach for real-time applications, but is not applicable when the SMPC optimizes over policies.\\

Multiple works dealing with multi-traffic scenarios are investigated in simulation settings \cite{brudigam2021stochastic, batkovic2020robust, oliveira2023interaction, liu2022interaction}.
Validating developed autonomous vehicle technology through vehicle tests is desirable to demonstrate its capabilities and effectiveness in real-world setups.
However, investigating algorithms in real-world multi-traffic scenarios is challenging due to safety concerns and legal issues. 
In order to mitigate these challenges associated with evaluating the performance of the proposed control system, a Vehicle-in-the-Loop (VIL) setup is adopted \cite{ma2018VIL, tettamanti2018vehicle}.
In the VIL setup, the Ego Vehicle (EV) is driven in the real world, in closed-loop with the proposed control algorithm while interacting with virtual surrounding vehicles which are simulated using a microscopic simulator \cite{ard2023VILCAV, joa2023energy}.
This allows us to test the performance of the proposed control system in a more realistic yet safe environment, while still leveraging the benefits of simulation for perception and driving environment construction.

\section{Problem Formulation}
\label{sec:prblm_f}
In this section we formally cast the problem of designing SMPC in the context of autonomous driving.

\subsection{Preliminaries}

\subsubsection*{Notation} 
The index set $\{k_1,k_1+1,\dots, k_2\}$ is denoted by $\mathbb{I}_{k_1}^{k_2}$. The cardinality of a discrete set $\mathcal{S}$ is denoted by $|\mathcal{S}|$ (e.g., $|\mathbb{I}_{k_1}^{k_2}|=k_2-k_1+1$). We denote $\Vert\cdot\Vert$ by the Euclidean norm and $\Vert x\Vert_{M}=\Vert \sqrt{M}x\Vert$ for some $M\succ 0$. 
%
%
The binary operator $\otimes$ denotes the Kronecker product.   
The partial derivative of function $f(x,u)$ with respect to $x$ at $(x,u)=(x_0,u_0)$ is denoted by $\partial_x f(x_0,u_0)$. \\


\subsubsection*{EV modeling}
We model the dynamics of the EV in the Frenet frame moving
along a curve $\gamma(s)=[\bar{X}(s), \bar{Y}(s), \bar{\psi}(s)]$ parameterised by the arc length $s$, which describes the position and heading of the centerline of a lane in the road \cite{fork2021models}. 
Let $x_t=[s_t,\ e_{y,t},\ e_{\psi,t},\ v_t]^\top$ be the state of the EV at time $t$ where
 $s_t, e_{y,t}, e_{\psi,t}$ are the arc length, lateral offset and relative heading with respect to the centerline $\gamma(\cdot)$, and $v_t$ is the EV's speed. Then the dynamics of the EV can be described as 
 {\small{
 \begin{align} \label{eq: bicycle model}
     \dot{x}_t=\begin{bmatrix}\dfrac{v_tcos(e_{\psi,t})}{1-e_{y,t}\kappa(s_t)}\\v_t\sin(e_{\psi,t})\\\dot{\psi}_t-\dfrac{v_tcos(e_{\psi,t})\kappa(s_t)}{1-e_{y,t}\kappa(s_t)}\\a_t\end{bmatrix}
 \end{align}}}
 where  $\kappa(s_t)=\dfrac{d\bar{\psi}(s_t)}{ds}$ describes the curvature of $\gamma(\cdot)$, $a_t$ is the EV's acceleration and $\dot{\psi}_t$ is the EV's global yaw rate. The dynamics of the EV are time-discretized (with any explicit integration scheme) to obtain the model $x_{t+1}=f^{EV}(x_t,u_t)$, with inputs $u_t=[a_t,\ \dot{\psi}_t]$. Given the state $x_t$, the EV's global pose can be obtained via a function $\mathcal{G}^\gamma(\cdot)$, {\small{\begin{align}\begin{bmatrix}
     X_t\\ Y_t\\ \psi_t
 \end{bmatrix}=\mathcal{G}^\gamma(x_t)=\begin{bmatrix}\bar{X}(s_t)-e_{y,t}\sin(\bar{\psi}(s_t))\\ \bar{Y}(s_t)+e_{y,t}\cos(\bar{\psi}(s_t))\\ e_{\psi,t}+\bar{\psi}(s_t)\end{bmatrix}.\end{align}}}
The system state and input constraints are given by polytopic sets which capture vehicle actuation limits and traffic rules,
{\small{\begin{align}
\mathcal{X}=\{x: a^\top_{x,i}x\leq b_{x,i} ~\forall i\in\mathbb{I}_1^{n_X}\},\nonumber\\ 
\mathcal{U}=\{u: a^\top_{u,i}u\leq b_{u,i} ~\forall i\in\mathbb{I}_1^{n_U}\}.
\end{align}}}

We assume a kinematically feasible reference trajectory, {\small{\begin{align}\label{eq:EV_ref_traj}
    \{(x^{ref}_t,u^{ref}_{t})\}_{t=0}^{T}
\end{align}}}is provided for the EV. This serves as the EV's desired trajectory which can be computed offline (or online at lower frequency) accounting for the EV's route, actuation limits, and static environment constraints (like lane boundaries, traffic rules).  However, this reference does not consider the dynamically evolving TVs for real-time obstacle avoidance.\\

\subsubsection*{TV predictions}
Let $n_{TV}$ be the number of TVs in consideration and denote the position of the $i$th TV at time $t$ as $o^i_t=[X^i_t\ Y^i_t]^\top$, and define  $o_t=[o^{1\top}_t,\dots,o^{n_{TV}}_t]^\top$ which stacks the positions of all the TVs. For collision avoidance, we use an off-the-shelf prediction model \cite{multipath_2019, trajectron_2020} trained on traffic datasets \cite{l5kit2020, nuscenes2019} that provides $N-$step predictions of the TVs' positions given by a multi-modal Linear Time-Varying (LTV) model $\forall k\in\mathbb{I}_t^{t+N}, j\in\mathbb{I}_1^J$,
{\small{
\begin{align}\label{eq:TV_preds}
o_{k+1|t,j}=T_{k|t,j}o_{k|t,j}+c_{k|t,j}+n_{k|t,j},
\end{align}}}
where $o_{k|t,j}$ is the prediction of the TVs' positions at time $k$ for mode $j$, $T_{k|t,j},c_{k|t,j}$ are time-varying matrices and vectors for the TVs' prediction for mode $j$, and the process noise is given by $n_{k|t,j}\sim\mathcal{N}(0,\Sigma_{k|t,j})$. The mode $j\in\mathbb{I}_1^J$ captures distinct interactions/maneuvers of the TVs as a group. We denote $p_t=[p_{t,1},..,p_{t,j}]$ as the probability distribution over the modes at time $t$, and $\sigma_t$ to be the true, \textit{unknown} mode.

 
 \subsection{Stochastic Model Predictive Control Formulation}
 \label{sec:smpc_approach}
 We aim to design a computationally efficient feedback control $u_t=\pi_t(x_t, o_t)$ for the EV to track the reference trajectory \eqref{eq:EV_ref_traj}, satisfy state-input constraints and avoid collisions with the TVs by effectively addressing the uncertainty arising from the TVs' multi-modal predictions \eqref{eq:TV_preds}.
     
 We propose a Stochastic Model Predictive Control (SMPC) formulation to compute the feedback control $u_t$. The optimization problem of our SMPC takes the form,

 {\small{
 \begin{mini!}[2]
 {\substack{\{\theta_{t,j}\}_{j=1}^J}}{\sum_{j=1}^J p_{t,j}\mathbb{E}\left[C_t(\mathbf{x}_{t,j},\mathbf{u}_{t,j})\right]\label{opt:obj}}{\label{opt:SMPC_skeleton}}{}
\addConstraint{x_{k+1|t,j}}{=f^{EV}_k(x_{k|t,j}, u_{k|t,j})\label{opt:EV_dyn}}
\addConstraint{o_{k+1|t,j}}{=T_{k|t,j}o_{k|t,j}+c_{k|t,j}+n_{k|t,j}\label{opt:TV_dyn}}
\addConstraint{\mathbb{P}(g_k(x_{k+1|t}, o^i_{k+1|t})\geq 0)}{\geq 1-\epsilon \label{opt:oa_constr}}{}
\addConstraint{(x_{k+1|t,j},u_{k|t,j})\in\mathcal{X}\times\mathcal{U}}{\label{opt:ev_constr}}{}
\addConstraint{\mathbf{u}_{t,j}}{\in\Pi_{\theta_{t,j}}(\mathbf{x}_{t,j},\mathbf{o}_{t,j})\label{opt:gen_pol_class}}
\addConstraint{x_{t|t,j}=x_t,\ u_{t|t,j}=u_{t|t,1},\ o_{t|t,j}=o_t }{\label{opt:init}}{}
\addConstraint{~ \forall i\in\mathbb{I}_1^{n_{TV}}, \forall j\in\mathbb{I}_1^{J}, \forall k\in\mathbb{I}_t^{t+N-1}.}{\nonumber}
\end{mini!}
}}
where $\mathbf{u}_{t,j}=[u_{t|t,j},\dots, u_{t+N-1|t,j}]$, $\mathbf{x}_{t,j}=[x_{t|t,j},\dots, x_{t+N|t,j}]$ and $\mathbf{o}_{t,j}$ (defined similarly to $\mathbf{x}_{t,j}$), denote stacked predictions along the horizon for mode $j$. The SMPC feedback control action is obtained from the optimal solution of \eqref{opt:SMPC_skeleton} 
as
{\small{
\begin{align}\label{eq:SMPC}u_t=\pi_{\mathrm{SMPC}}(x_t,o_t)=u^\star_{t|t,1},\end{align}
}}
where the EV and TV state feedback enters the optimization problem in \eqref{opt:init}.
The function  $C_t(\cdot,\cdot)$ in the objective \eqref{opt:obj}  penalizes the deviation of the EV's trajectory for mode $j$ from the reference \eqref{eq:EV_ref_traj}, and is weighted by the probability of the mode given by $p_{t,j}$.  The collision avoidance constraints are imposed as chance constraints \eqref{opt:oa_constr} along with polytopic state and input constraints $\mathcal{X},\mathcal{U}$ for the EV. The EV's controls along the prediction horizon are given by parameterized policies $\Pi_{\theta_{t,j}}(\mathbf{x}_{t,j},\mathbf{o}_{t,j})$ \eqref{opt:gen_pol_class} that are functions of the EV's and TVs' states, as opposed to open-loop sequences. The policies are multi-modal, which makes the EV's closed-loop trajectories in \eqref{opt:EV_dyn} multi-modal. Consequently, the chance constraints \eqref{opt:oa_constr} are defined over the closed-loop multi-modal distributions, re-written using the law of total probability as: 
{\small{
\begin{align*}
&\sum_{j=1}^Jp_{t,j}\mathbb{P}(g_k(x_{k+1|t,j}, o^i_{k+1|t,j})\geq 0) \geq 1-\epsilon.
\end{align*}
}}Deriving a deterministic reformulation of this chance constraint that is computationally efficient, but not too conservative is the key technical challenge in the SMPC design. Towards addressing this challenge, our SMPC formulation features 1) a novel multi-modal policy parameterization \eqref{opt:gen_pol_class} for shaping the multi-modal closed-loop distribution, 2) a convex inner-approximation technique for the multi-modal chance constraint \eqref{opt:oa_constr} involving mode-dependent, risk levels $r_{t,j}=1-\epsilon_{t,j}$ for $ \epsilon_{t,j}\in[0,1]$ and 3) simultaneous, convex optimization over policy parameters and risk levels for control computation.  These features enable computationally efficient synthesis of \eqref{eq:SMPC} while enhancing the feasibility of \eqref{opt:SMPC_skeleton} by effectively addressing the multi-modal uncertainty.

 

\section{Stochastic MPC with Multi-Modal Predictions}

\label{sec:SMPC}

In this section, we detail our SMPC formulation for the EV to track the reference \eqref{eq:EV_ref_traj} while incorporating multi-modal predictions  \eqref{eq:TV_preds} of the TV for obstacle avoidance. 

\subsection{Vehicles' Prediction Models}
The EV prediction model \eqref{opt:EV_dyn} is a linear time-varying model (LTV), obtained by linearizing $f^{EV}(\cdot)$ about the reference trajectory \eqref{eq:EV_ref_traj}. At time $t$, let $\bar{t}=-t+\arg\min_{k\in\mathbb{I}_0^T} |s_{t}-s^{ref}_k|$ and define $\Delta x_{k|t}=x_{k|t}-x^{ref}_{k+\bar{t}},\ \Delta u_{k|t}=u_{k|t}-u^{ref}_{k+\bar{t}}, \forall k\in\mathbb{I}_t^{t+N-1}$. Then the LTV model is given as
\small
\begin{align}\label{eq:EV_ltv_model}
    &\Delta x_{k+1|t}=A_{k|t}\Delta x_{k|t}+B_{k|t}\Delta u_{k|t}+w_{k|t}\\
    &A_{k|t}=\partial_xf^{EV}(x^{ref}_{k+\bar{t}},u^{ref}_{k+\bar{t}}),\ B_{k|t}=\partial_u f^{EV}(x^{ref}_{k+\bar{t}},u^{ref}_{k+\bar{t}})\nonumber
\end{align}
\normalsize
where the additive process noise $w_{k|t}\sim\mathcal{N}(0,\Sigma_w)$ (i.i.d with respect to $k$) models linearization error and other stochastic noise sources. The polytopic state and input constraints \eqref{opt:ev_constr} are replaced by the chance-constraints $\forall k\in\mathbb{I}_0^{N-1}$,
{\small{
\begin{align}\label{eq:EV_cc}
    &\mathbb{P}((\Delta x_{k+1|t},\Delta u_{k|t})\in\Delta\mathcal{X}_k\times\Delta\mathcal{U}_k )\geq 1-\epsilon,\\
   &\Delta\mathcal{X}_k=\{\Delta x:a^\top_{x,i}\Delta x\leq b_{x,i}-a^\top_{x,i}x^{ref}_{\bar{t}+k}, \forall \in\mathbb{I}_1^{n_X} \},\nonumber\\
   &\Delta\mathcal{U}_k=\{\Delta u:a^\top_{u,i}\Delta x\leq b_{u,i}-a^\top_{u,i}u^{ref}_{\bar{t}+k}, \forall \in\mathbb{I}_1^{n_U} \}. \nonumber 
\end{align}}}

The multi-modal TV predictions \eqref{eq:TV_preds} are rather general and can also represent tree-structured predictions. For example, consider the tree on the left in Fig.~\ref{fig:mm23} where each joint TV state prediction $o_{k|t,j}$ at time step $k$ for mode $j$ represents a node, $o_{k+1|t,j}$ is the child node of $o_{k|t,j}$ and $o_{t|t}\equiv o_t$ is the root. 
The root node branches out into $J$ children, where each branch signifies a discrete decision made by the TVs to evolve in a particular mode after time $t$. In general, the branching decisions may occur at a time step further along the prediction horizon, and at each branching point, there may be fewer than $J$ children (as depicted by the tree on the right in Fig.~\ref{fig:mm23}). 
For modeling branching out from the parent node $o_{k'|t,j_1}$ at time $k'>t$ into $m$ children $o_{k'+1|t,j_1}, o_{k'+1|t,j_2},\dots, o_{k'+1|t,j_m}$, the trajectory evolution of the TVs up to time $k'$ can be constrained to match using the equalities $(T_{k|t,j_r}, c_{k|t,j_r}, n_{k|t,j_r})= (T_{k|t,j_{r+1}}, c_{k|t,j_{r+1}}, n_{k|t,j_{r+1}})$ $,\forall r\in \mathbb{I}_1^{m-1}, k\in\mathbb{I}_t^{k'-1}$, as depicted by the tree on the right in Fig.~\ref{fig:mm23}.  The equalities defining this tree structure are captured in the set $\mathcal{T}_t\subset \mathbb{I}_t^{t+N}\times\left(\mathbb{I}_1^J \times\mathbb{I}_1^J\right)$, where $(k, \{j_1, j_2\})\in\mathcal{T}_t$ corresponds to the equality $(T_{k|t,j_1},c_{k|t,j_1}, n_{k|t,j_1})=(T_{k|t,j_2},c_{k|t,j_2}, n_{k|t,j_2})$.

\begin{figure}[h]
    \centering
    \includegraphics[width=1.\columnwidth]{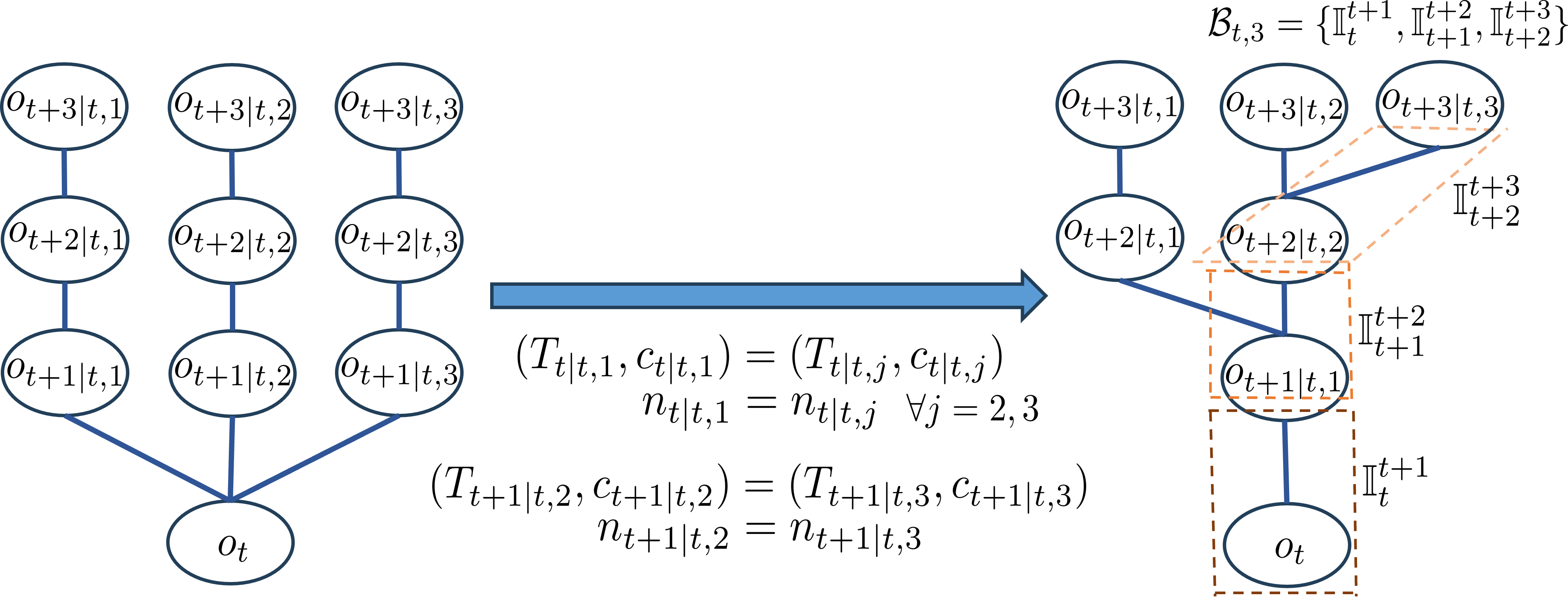}
    \caption{{\small{Modelling tree-structured multi-modal predictions. Here with $\mathcal{T}_t=\{(t,\{1,2\}), (t,\{1,3\}), (t+1,\{2,3\})\}$, we get a tree with $J=3$ leaf nodes with branch points at prediction steps $t$ and $t+1$. The branches for mode $3$ are $\mathcal{B}_{t,3}=\{\mathbb{I}_{t}^{t+1},\mathbb{I}_{t+1}^{t+2}, \mathbb{I}_{t+2}^{t+3}\}$}}}
    \label{fig:mm23}    
\end{figure}
For each mode $j$, we can define a set of \textit{branches} comprising the time intervals in the tree between branch points, defined formally as
{\small{\begin{align}\label{eq:branches}
\mathcal{B}_{t,j}=\left\{\mathbb{I}_{k_1}^{k_2} \subset\mathbb{I}_{t}^{t+N}\middle\vert \begin{aligned} \exists& j_1,j_2: (k_1,\{j,j_1\}), (k_2, \{j,j_2\})\in\mathcal{T}_t,\\
& k_2=\min(t+N,\min_{k>k_1, (k,\{j,l\})\in\mathcal{T}_t} k)
\end{aligned}\right\}
\end{align}}}
If there are no branches in mode $j$, i.e., $\mathcal{B}_{t,j}=\emptyset$, we add the trivial branch, $\mathcal{B}_{t,j}=\{\mathbb{I}_t^{t+N}\}$. 
\subsection{Parameterized EV \& TV State Feedback Policies}
\begin{figure}[!h]
    \centering
    \begin{subfigure}{0.45\columnwidth}
    \includegraphics[width=\columnwidth]{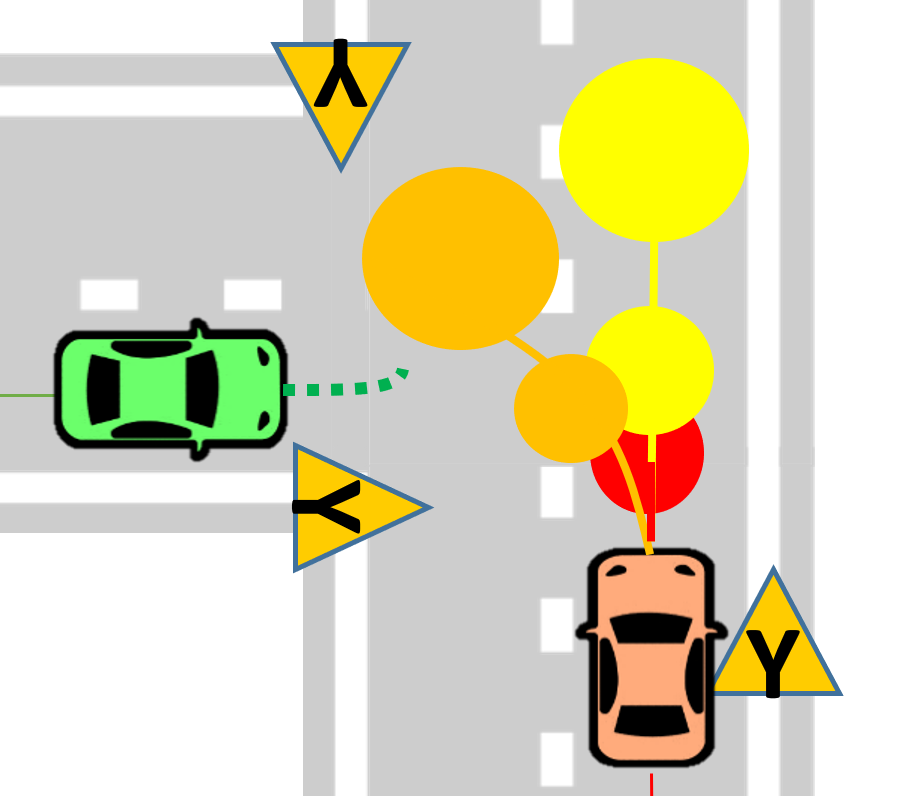} 
        \caption{Prediction with open-loop sequences $\mathbf{u}_t\in\mathbb{R}^{2\times N}$}
        \end{subfigure}\hfill %
    \begin{subfigure}{0.45\columnwidth}
        \includegraphics[width=\columnwidth]{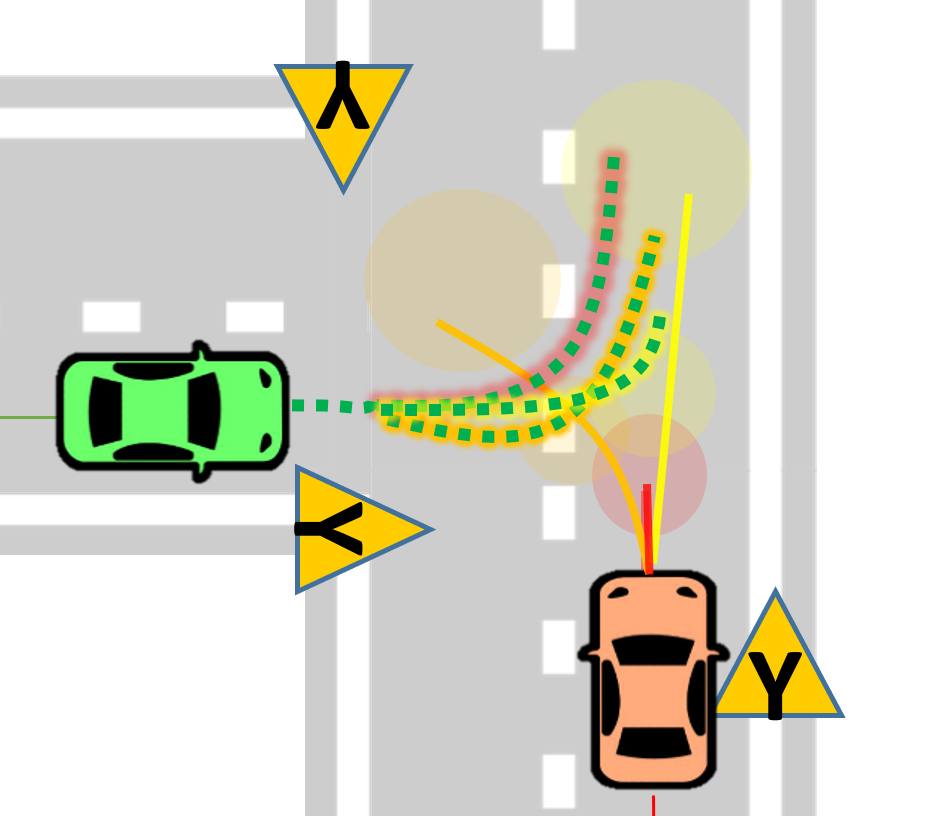} 
        \caption{Predictions with policies $\mathbf{u}_t\in\Pi_{\theta_t}(\mathbf{x}_t,\mathbf{o}_t)$}
        \end{subfigure}
    \caption{\small{In (a), solving \eqref{opt:SMPC_skeleton} over open-loop sequences can be conservative because EV prediction (green-dashed) from a single sequence of control inputs must satisfy all the obstacle avoidance constraints. In (b), optimizing over policies \eqref{opt:gen_pol_class} allows for different EV predictions depending on the TV trajectory realizations (green-dashed with highlights corresponding to different TV trajectories).\label{fig:OLvsFB}}}    
\end{figure}
We propose to use parameterized polices $\Pi_{\theta_t}(\mathbf{x}_t,\mathbf{o}_t)$ so that the EV's control $\mathbf{u}_t$ are functions of the EV and TV trajectories $\mathbf{x}_t, \mathbf{o}_t$ along the prediction horizon (as depicted in Fig. \ref{fig:OLvsFB}). Given the EV model \eqref{eq:EV_ltv_model} and mode-dependent TV model \eqref{eq:TV_preds}, consider the following feedback policy $\Delta u_{k|t,j}=\pi_{k|t,j}(x_{k|t,j}, o_{k|t,j})$ for the EV :
{\small{
\begin{align}\label{eq:policy}
    \Delta u_{k|t,j}&=h^j_{k|t}+\sum\limits_{l=0}^{k-1}M^j_{l,k|t}w_{l|t} + K^j_{k|t}(o_{k|t,j}-\mu_{k|t,j}),
\end{align}}}
where $\mu_{k|t,j}=\mathbb{E}[o_{k|t,j}]$ denotes the expected prediction of the TVs in \eqref{eq:TV_preds}. The policy \eqref{eq:policy} uses State Feedback (SF) for the TV states but Affine Disturbance Feedback (ADF) for feedback over EV states (see \cite{goulart2006optimization} for equivalence to state feedback) instead of SF. SF policies for the TVs are beneficial towards scaling our approach to multiple TVs because we use {\small{$O( N^2+n_{TV}\cdot N)$}} parameters instead of {\small{${O(n_{TV}\cdot N^2)}$}} parameters for ADF.  \\


Despite using SF for the TVs' states, $\Delta x_{k|t}$ are affine in $\{K_{s|t}\}_{s=0}^k~\forall k\in\mathbb{I}_0^{n-1}$ as shown next. For mode $j\in\mathbb{I}_1^J$, define the stacked quantities along the prediction horizon $\mathbf{\Delta u}_{t,j},\mathbf{w}_t, \mathbf{o}_{t,j}, \boldsymbol{\mu}_{t,j}$ (e.g., $\mathbf{\Delta u}_{t,j}=[\Delta u^{\top}_{0|t,j}\dots\Delta u^{\top}_{N-1|t,j}]^\top$) and use \eqref{eq:policy} to get {\small{$$\mathbf{\Delta u}_{t,j}=\mathbf{h}_t^j+\mathbf{M}_t^{j}\mathbf{w}_t+\mathbf{K}_t^j(\mathbf{o}_{t,j}-\boldsymbol{\mu}_{t,j})$$}} using the stacked policy parameters $\mathbf{h}_t^j\in\mathbb{R}^{2N}, \mathbf{M}_t^{j}\in\mathbb{R}^{2N\times 4N},  \mathbf{K}^j_{t}\in\mathbb{R}^{2N\times 2N\cdot n_{TV}}$ (\eqref{mat:h}-\eqref{mat:M} in appendix~\ref{app:matrices}). Denote the stacked EV closed-loop predictions for mode $j$ as $\boldsymbol{\Delta x}_{t,j}$ and the TV's stacked process noise as $\mathbf{n}_{t,j}$. Using matrices $\mathbf{A}_t, \mathbf{B}_t,\mathbf{T}^j_t, \mathbf{C}_t^j, \mathbf{L}_t^j, \mathbf{E}_t$  defined by \eqref{mat:AB}-\eqref{mat:EL} in the appendix, the closed-loop EV predictions are
  {\small{
  \begin{align*}
      \boldsymbol{\Delta x}_{t,j}=&\mathbf{A}_t\Delta x_{0|t}+\mathbf{B}_t(\mathbf{h}^j_t+\mathbf{K}^j_t(\mathbf{o}_{t,j}-\boldsymbol{\mu}_{t,j}))+(\mathbf{E}_t+\mathbf{B}_t\mathbf{M}^j_t)\mathbf{w}_t\\
      =&\mathbf{A}_t\Delta x_{0|t}+\mathbf{B}_t(\mathbf{h}^j_t+\mathbf{K}^j_t\mathbf{L}^j_t\mathbf{n}^j_t)+(\mathbf{E}_t+\mathbf{B}_t\mathbf{M}^j_t)\mathbf{w}_t
  \end{align*}}}
  which are affine in $\mathbf{h}^j_t,\mathbf{M}^j_t,\mathbf{K}^j_t$. \\
  
  To reflect the tree structure $\mathcal{T}_t$ of the TV predictions in the policy parameters $\boldsymbol{\Theta}_t(x_t,o_t)=\{\mathbf{h}^j_t,\mathbf{M}^j_t,\mathbf{K}^j_t\}_{j=1}^J$, we construct a new set {\small{$$\mathcal{T}_i^{\pi}=\mathcal{T}_t\cup \{(k+1,j_1,j_2)\ : (k,j_1,j_2)\in\mathcal{T}_t \},$$}}  and then impose the equalities specified in $\mathcal{T}_t^{\pi}$ on the policy parameters. $\mathcal{T}_t^{\pi}$ has a similar structure to $\mathcal{T}_t$ but the branch points occur with a single time-step delay to encode that the EV observes the mode one time-step after the branch point. For the example in Fig.~\ref{fig:mm23}, we have $\mathcal{T}_t^{\pi}=\{(t,\{1,2\}), (t+1,\{1,2\}), (t,\{1,3\}), (t+1,\{1,3\}), (t+1,\{2,3\}), (t+2,\{2,3\})\}$ and the policy parameterization for mode $j=3$ is given as
        {\small{
\begin{align*}
    &\mathbf{h}_t^3=[h^{1\top}_{t|t}\ h^{1\top}_{t+1|t}\ h^{2\top}_{t+2|t} ]^\top\\
    &\mathbf{K}_t^3=\text{blkdiag}\left(K^1_{t|t},K^1_{t+1|t},K^2_{t+2|t}\right)\\
    &\mathbf{M}_t^{3}=\begin{bmatrix}
    O& O & O \\
    M^1_{t,t+1|t}& O & O \\
    M^2_{t,t+2|t}& M^2_{t+1,t+2|t} & O 
    \end{bmatrix}
    \end{align*}}}   
Similar to $\mathcal{B}_{t,j}$, we can describe the branches per mode for the policy tree as $\mathcal{B}^\pi_{t,j}$. Then for each mode $j$, we can split the policy matrices as the sums, $\mathbf{M}^j_t=\sum_{b=1}^{|\mathcal{B}^\pi_{t,j} |}\bar{\mathbf{M}}^{j,b}_t$, $\mathbf{K}^j_t=\sum_{b=1}^{|\mathcal{B}^\pi_{t,j}|}\bar{\mathbf{K}}^{j,b}_t$, where $\bar{\mathbf{M}}^{j,b}_t, \bar{\mathbf{K}}^{j,b}_t$ have the same shape as \eqref{mat:M},\eqref{mat:K} but consist only of policy parameters corresponding to the branch $b\in\mathcal{B}^\pi_{t,j}$. We denote this policy parameterization as $\boldsymbol{\Theta}_t(x_t,o_t; \mathcal{T}_t)$.

\subsection{Collision Avoidance Formulation}
We assume that we are given or can infer the rotation matrices for the $i$th TV for each mode along the prediction horizon as 
{\small{$\{\{R^i_{k|t,j}\}_{k=1}^N\}_{j=1}^J$}}. For collision avoidance between the EV and the $i$th TV, we impose the following chance constraint
\small
\begin{align}\label{eq:oa_chance_constraint}
    \mathbb{P}(g^i_{k|t}(P_{k|t},o^i_{k|t})\geq1\ )\geq 1-\epsilon~~~\forall k\in\mathbb{I}_1^N
\end{align}\normalsize
where the EV's position $P_{k|t}=[X_{k|t}, Y_{k|t}]$ is obtained from $\mathcal{G}^\gamma(\Delta x_{k|t}+x^{ref}_{t+k})$, and 
\small\begin{align}\label{eq:g_def}
 &g^i_{k|t}(P,o)=\Big\Vert\begin{bmatrix}\frac{1}{a_{ca}}&0\\0&\frac{1}{b_{ca}}\end{bmatrix} R^i_{k|t,\sigma_t}(P-o)\Big\Vert^2.
\end{align}\normalsize 
%
$a_{ca}=a_{TV}+d_{EV}, b_{ca}=b_{TV}+d_{EV}$ are semi-axes of the ellipse containing the TV's extent with a buffer of $d_{EV}$. $g_{k|t}(P,o)\geq1$ implies that the EV's extent (modelled as a disc of radius $d_{EV}$ and centre $P$) does not intersect the TV's extent which is modelled as an ellipse with semi-axes $a_{TV}, b_{TV}$ and centre $o$, oriented by $R_{k|t,\sigma_t}$ (for using general convex sets,  \cite{nair2022collision} can be used).
This constraint is non-convex because of the integral of the nonlinear function $g^i_{k|t}(\cdot)$ over the multi-modal distribution of $(P_{k|t}, o_{k|t}, p_t)$.
To address the nonlinearity, we use the convexity of $g^i_{k|t,j}(\cdot)$ to construct its affine under-approximation $l^{i}_{k+1|t,j}(\cdot)$ by defining \small $P^i_{k|t,j}=\mu_{k|t,j}+\frac{1}{\sqrt{g^i_{k|t,j}(P^{ref}_{k|t},\mu^i_{k|t,j})}}(P^{ref}_{k|t}-\mu^i_{k|t,j})$\normalsize\ to get:
\small
\begin{align*}
l^i_{k|t,j}(P,o)=&\partial_P g^i_{k|t}(P^{i}_{k|t,j},\mu^i_{k|t,j})(P-P^{i}_{k|t,j})\\
&+\partial_o g^i_{k|t}(P^{i}_{k|t,j},\mu^i_{k|t,j})(o-\mu^i_{k|t,j}).
\end{align*}
\normalsize
The constraint {\small{$l^i_{k+1|t,j}(P_{k+1|t},o^i_{k+1|t,j})\geq 0$}}\ is an inner-approximation of {\small{$g^i_{k|t}(P_{k+1|t},o^i_{k+1|t})\geq1$}} because {\small{$g^i_{k|t}(P_{k+1|t},o^i_{k+1|t})\geq 1+l^i_{k+1|t,j}(P_{k+1|t},o^i_{k+1|t,j})$}}  by construction. The multi-modal, affine chance constraints for collision avoidance are given as 
{\small{\begin{align}\label{eq:mm_ca}
\sum_{j=1}^Jp_{t,j}\mathbb{P}\left[ l^i_{k+1|t,j}(P_{k+1|t},o^i_{k+1|t,j})\geq 0 \right]\geq 1-\epsilon.
\end{align}}}
We define the curve $\gamma(\cdot)$ using piece-wise linear segments so that $\mathcal{G}^\gamma(\Delta x_{k|t}+x^{ref}_{t+k})=\mathcal{G}^\gamma(x^{ref}_{t+k})+\partial_x\mathcal{G}^\gamma(x^{ref}_{t+k})\Delta x_{k|t}$, and the constraint \eqref{eq:mm_ca} is affine in the policy parameters $\boldsymbol{\Theta}_t(x_t,o_t; \mathcal{T}_t)$. Next, we discuss the reformulation of the multi-modal affine chance constraints \eqref{eq:EV_cc}, \eqref{eq:mm_ca}. 
\subsection{Reformulation of Multi-modal Chance Constraints}\label{ssec:mmreform}
\begin{figure}[!h]
\includegraphics[width=1.0\columnwidth]{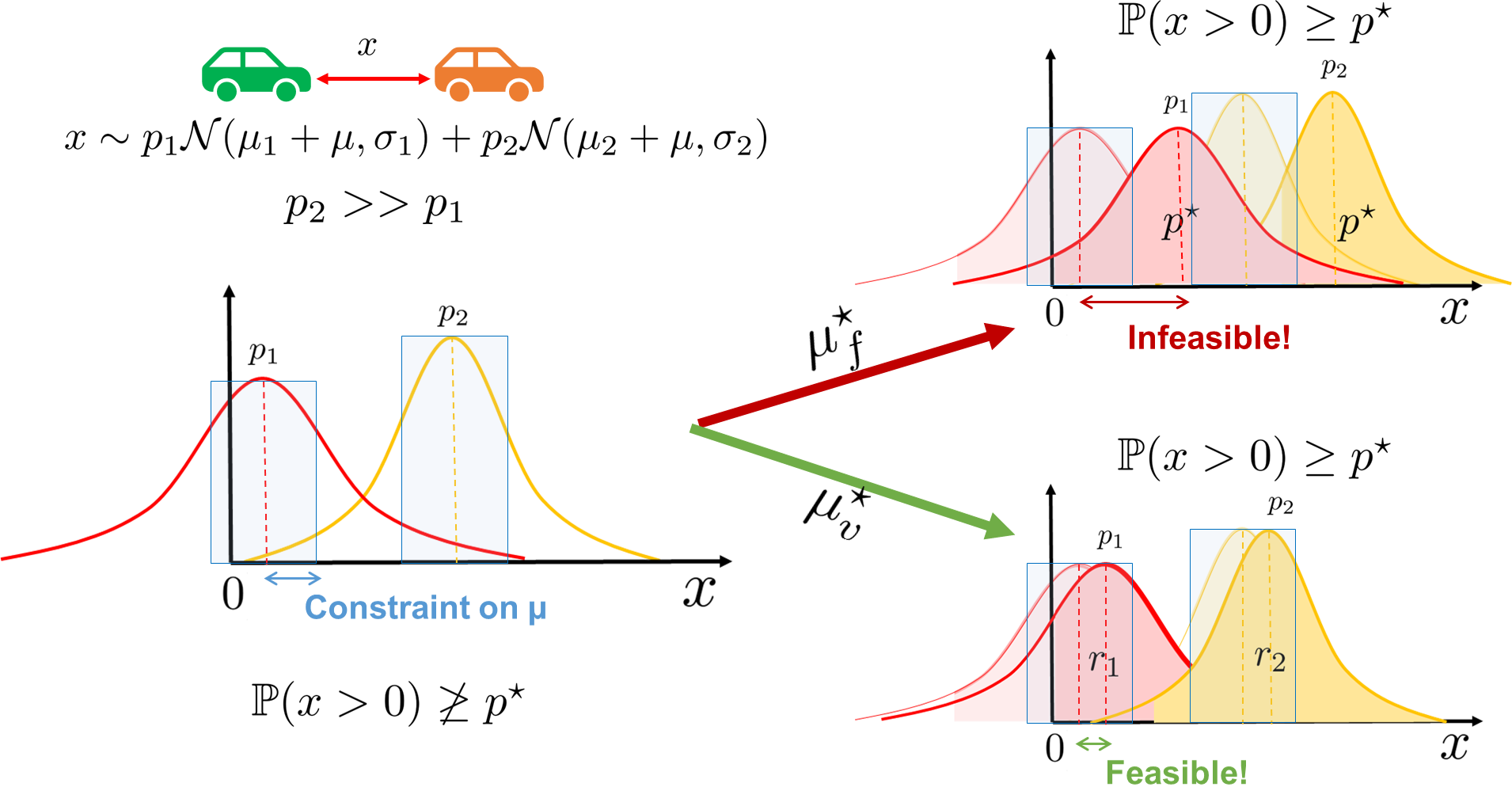}
\caption{{\small{For $x$ given by a GMM on the left, the figure depicts two formulations for shaping the distribution to satisfy $\mathbb{P}(x>0)\geq p^\star$ by varying $\mu$ within the blue shaded region. The resulting distributions are shown on the right for the fixed-risk and variable risk formulations at the top and bottom respectively. The latter allows satisfaction of the chance constraint without aggressive shaping of the distribution, by exploiting the difference in probabilities of the modes. Since $p_2\gg p_1$,  assigning a larger risk level $r_2$ (shaded yellow region) to mode 2 yields feasible distributions.}}  }\label{fig:var_risk}
\end{figure}
We propose a novel convex inner-approximation for multi-modal chance constraints, with the key feature of simultaneous risk allocation for reduced conservatism. The constraints \eqref{eq:EV_cc}, \eqref{eq:mm_ca} can be generically represented as the multi-modal affine chance constraint:
\begingroup
\allowdisplaybreaks
{\small{\begin{subequations}
\begin{align}
&\sum_{j=1}^{J}p_{t,j}\mathbb{P}\left[\begin{aligned}&a_{0}+a^{\top}_{1}\mathbf{h}^j_{t}+(a^{\top}_{2}\mathbf{M}^j_{t}+a^\top_3)\mathbf{w}_t\\&+(a^{\top}_{4}\mathbf{K}^j_t+a^{\top}_5)\mathbf{L}^j_{t}\mathbf{n}_{t}\end{aligned}\geq b\right]\geq 1-\epsilon\nonumber\\
\Leftrightarrow&\sum_{j=1}^{J}p_{t,j}r_{t,j}\geq 1-\epsilon,\label{eq:mm_psum}\\ 
&\mathbb{P}\left[\begin{aligned}&a_{0}+a^{\top}_{1}\mathbf{h}^j_{t}+(a^{\top}_{2}\mathbf{M}^j_{t}+a^\top_3)\mathbf{w}_t\\&+(a^{\top}_{4}\mathbf{K}^j_t+a^{\top}_5)\mathbf{L}^j_{t}\mathbf{n}_{t}\end{aligned}\geq b\right]\geq r_{t,j},\nonumber\\
&\Leftrightarrow \begin{aligned}a_{0}+a^{\top}_{1}\mathbf{h}^j_{t}-b\geq\Phi^{-1}(r_{t,j})\left\Vert \begin{bmatrix}\boldsymbol{\Sigma}_t(\mathbf{M}^{j\top}_{t}a_{2}+a_3)\\ \boldsymbol{\Sigma}_{t,j}\mathbf{L}^{j\top}_{t}(\mathbf{K}^{j\top}_ta_4+a_5)\end{bmatrix}\right\Vert_2\end{aligned}\label{eq:mm_affine_cc}
\end{align}
\end{subequations}}}
\endgroup where $\Phi^{-1}(\cdot)$ is the quantile function for $\mathcal{N}(0,1)$, and $r_{t,j}$ are the risk-levels $\forall j\in\mathbb{I}_1^J$. We assume that  $1-\epsilon \geq \frac{1}{2}$ and $r_{t,j}\geq \frac{1}{2},$ $\forall j\in\mathbb{I}_1^J$. As such, this constraint is non-convex in the policy parameters $\boldsymbol{\Theta}_t(x_t,o_t; \mathcal{T}_t)$ and risk-levels $r_{t,j}$ because of \eqref{eq:mm_affine_cc}. The fixed-risk allocation approach \cite{zhou2018joint, nair2022stochastic} fixes the risk-level $r_{t,j}=1-\epsilon$ $\forall j\in\mathbb{I}_1^J$ to obtain a convex-inner approximation of the multi-modal chance constraint. However as depicted in Fig.~\ref{fig:var_risk}, this approach can be conservative and compromises the feasibility of the SMPC optimization problem because the risk levels are allocated disregarding the probability of the individual modes.
Alternatively, iterative solution strategies \cite{ono2008iterative}, have been proposed for variable risk allocation where, alternating sub-problems are solved by fixing either the policy parameters or risk levels. This enhances the feasibility of the optimization problem but at the price of significant computational cost. Next, we propose a convex-inner approximation to the multi-modal chance constraint \eqref{eq:mm_psum}, \eqref{eq:mm_affine_cc} for simultaneous risk-allocation and policy synthesis, to alleviate the computational cost of iterative approaches, but also improve the feasibility of constraint compared to the fixed allocation approach. \\

First, we focus on reformulating \eqref{eq:mm_affine_cc}.  We recall that the policy parameters can be split as $\mathbf{M}^j_t=\sum_{b=1}^{|\mathcal{B}^\pi_{t,j} |}\bar{\mathbf{M}}^{j,b}_t$, $\mathbf{K}^j_t=\sum_{b=1}^{|\mathcal{B}^\pi_{t,j}|}\bar{\mathbf{K}}^{j,b}_t$, and  begin by introducing new variables 
{\small{\begin{align*}
&\eta_{t,j}=\Phi^{-1}(r_{t,j}),\\
&\tilde{\mathbf{M}}^j_t=\Phi^{-1}(r_{t,j})\bar{\mathbf{M}}^{j,|\mathcal{B}^\pi_{t,j}|}_t,\ 
\tilde{\mathbf{K}}^j_t=\Phi^{-1}(r_{t,j})\bar{\mathbf{K}}^{j,|\mathcal{B}^\pi_{t,j}|}_t \end{align*} }}

to rewrite the  constraint \eqref{eq:mm_affine_cc} as
\small
\begin{align}\label{eq:noncvx_tight}
&a_{0}+a^{\top}_{1}\mathbf{h}^j_{t}-b\geq\nonumber\\
&\left\Vert\begin{bmatrix}\boldsymbol{\Sigma}_t(\tilde{\mathbf{M}}^{j\top}_{t}a_{2}+\eta_{t,j}\sum_{b=1}^{|\mathcal{B}^\pi_{t,j} |-1}\bar{\mathbf{M}}^{j,b\top}_ta_{2}+a_3\eta_{t,j})\\ \boldsymbol{\Sigma}_{t,j}\mathbf{L}^{j\top}_{t}(\tilde{\mathbf{K}}^{j\top}_ta_4+\eta_{t,j}\sum_{b=1}^{|\mathcal{B}^\pi_{t,j} |-1}\bar{\mathbf{K}}^{j,b\top}_ta_4+a_5\eta_{t,j})\end{bmatrix}\right\Vert_2
\end{align}
\normalsize
The variable $\eta_{t,j}$ can be interpreted as the number of standard deviations by which the affine constraint is tightened.
Since $r_{t,j}\geq \frac{1}{2}$, we have $\eta_{t,j}\geq0$, and additionally, let $\eta_{t,j}\leq \eta_{max}$ to ignore the tail of the Gaussian distribution. The inequality \eqref{eq:noncvx_tight} is non-convex in the new variables because of the bilinear terms $\eta_{t,j}\bar{\mathbf{M}}^{j,b\top}_{t}$, $\eta_{t,j}\bar{\mathbf{K}}^{j,b\top}_{t}$. However, fixing either variable in these terms yields a convex, second-order cone constraint. We use this insight to obtain a convex inner-approximation of this non-convex inequality as follows.\\

\begin{prop}A convex inner approximation of \eqref{eq:noncvx_tight} can be obtained as the intersection of the two second-order cone constraints in the variables $\{\mathbf{h}^j_t, \tilde{\mathbf{M}}^j_t, \tilde{\mathbf{K}}^j_t, \{\bar{\mathbf{M}}^{j,b}_t, \bar{\mathbf{K}}^{j,b}_t\}_{b=1}^{|\mathcal{B}^\pi_{t,j}|-1}, \eta_{t,j} \}$:
\small
\begin{align}\label{eq:mm_cvx_tight}
&a_{0}+a^{\top}_{1}\mathbf{h}^j_{t}-b\geq\left\Vert \begin{bmatrix}\boldsymbol{\Sigma}_t(\tilde{\mathbf{M}}^{j\top}_{t}a_{2}+a_3\eta_{t,j})\\ \boldsymbol{\Sigma}_{t,j}\mathbf{L}^{j\top}_{t}(\tilde{\mathbf{K}}^{j\top}_ta_4+a_5\eta_{t,j})\end{bmatrix}\right\Vert_2, \nonumber \\
&a_{0}+a^{\top}_{1}\mathbf{h}^j_{t}-b\geq\nonumber\\
&\left\Vert \begin{bmatrix}\boldsymbol{\Sigma}_t(\tilde{\mathbf{M}}^{j\top}_{t}a_{2}+\eta_{max}\sum_{b=1}^{|\mathcal{B}^\pi_{t,j} |-1}\bar{\mathbf{M}}^{j,b\top}_ta_{2}+a_3\eta_{t,j})\\ \boldsymbol{\Sigma}_{t,j}\mathbf{L}^{j\top}_{t}(\tilde{\mathbf{K}}^{j\top}_ta_4+\eta_{max}\sum_{b=1}^{|\mathcal{B}^\pi_{t,j} |-1}\bar{\mathbf{K}}^{j,b\top}_ta_4+a_5\eta_{t,j})\end{bmatrix}\right\Vert_2
\end{align}
\normalsize
\end{prop}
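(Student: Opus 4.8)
The plan is to establish that \eqref{eq:mm_cvx_tight} is a valid inner approximation, i.e.\ that every point satisfying the two second-order cone (SOC) constraints also satisfies the non-convex inequality \eqref{eq:noncvx_tight}. The observation driving the argument is that the only obstruction to convexity in \eqref{eq:noncvx_tight} is the scalar $\eta_{t,j}$ appearing bilinearly with the branch-summation terms $\sum_{b=1}^{|\mathcal{B}^\pi_{t,j}|-1}\bar{\mathbf{M}}^{j,b\top}_t a_2$ and $\sum_{b=1}^{|\mathcal{B}^\pi_{t,j}|-1}\bar{\mathbf{K}}^{j,b\top}_t a_4$. So first I would isolate this dependence by introducing, for a scalar parameter $\lambda$ and with all decision variables frozen at a feasible point, the vector-valued affine map
{\small
\begin{align*}
z(\lambda)=\begin{bmatrix}\boldsymbol{\Sigma}_t\big(\tilde{\mathbf{M}}^{j\top}_t a_2+\lambda\textstyle\sum_{b=1}^{|\mathcal{B}^\pi_{t,j}|-1}\bar{\mathbf{M}}^{j,b\top}_t a_2+a_3\eta_{t,j}\big)\\ \boldsymbol{\Sigma}_{t,j}\mathbf{L}^{j\top}_t\big(\tilde{\mathbf{K}}^{j\top}_t a_4+\lambda\textstyle\sum_{b=1}^{|\mathcal{B}^\pi_{t,j}|-1}\bar{\mathbf{K}}^{j,b\top}_t a_4+a_5\eta_{t,j}\big)\end{bmatrix},
\end{align*}
}in which only the multiplier $\lambda$ of the branch-sum terms is free while the instances of $\eta_{t,j}$ multiplying $a_3,a_5$ are kept at their true value. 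With this bookkeeping, the right-hand side of \eqref{eq:noncvx_tight} equals $\Vert z(\eta_{t,j})\Vert_2$, the first SOC constraint in \eqref{eq:mm_cvx_tight} reads $a_{0}+a^{\top}_{1}\mathbf{h}^j_{t}-b\geq\Vert z(0)\Vert_2$, and the second reads $a_{0}+a^{\top}_{1}\mathbf{h}^j_{t}-b\geq\Vert z(\eta_{max})\Vert_2$.

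Since $z(\cdot)$ is affine and the Euclidean norm is convex, the scalar function $\phi(\lambda):=\Vert z(\lambda)\Vert_2$ is convex in $\lambda$. Because $\eta_{t,j}\in[0,\eta_{max}]$, I can write $\eta_{t,j}=(1-\alpha)\cdot 0+\alpha\cdot\eta_{max}$ with $\alpha=\eta_{t,j}/\eta_{max}\in[0,1]$. Applying Jensen's inequality to $\phi$ and then invoking the two SOC constraints yields
{\small
\begin{align*}
\Vert z(\eta_{t,j})\Vert_2=\phi(\eta_{t,j})\leq(1-\alpha)\phi(0)+\alpha\,\phi(\eta_{max})\leq a_{0}+a^{\top}_{1}\mathbf{h}^j_{t}-b,
\end{align*}
}which is precisely \eqref{eq:noncvx_tight}. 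Hence any point feasible for both SOC constraints is feasible for \eqref{eq:noncvx_tight}. To finish, I would note that each constraint in \eqref{eq:mm_cvx_tight} is genuinely a second-order cone constraint jointly in $\{\mathbf{h}^j_t,\tilde{\mathbf{M}}^j_t,\tilde{\mathbf{K}}^j_t,\{\bar{\mathbf{M}}^{j,b}_t,\bar{\mathbf{K}}^{j,b}_t\}_b,\eta_{t,j}\}$, since the left-hand sides are affine in $\mathbf{h}^j_t$ and the right-hand sides are Euclidean norms of vectors depending affinely on these variables; their intersection is therefore convex, which establishes the claim.

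I expect the only delicate point to be the bookkeeping above: $\eta_{t,j}$ enters \eqref{eq:noncvx_tight} in two distinct ways, linearly through the $a_3\eta_{t,j},a_5\eta_{t,j}$ terms and bilinearly through the branch sums, and the convexity argument is valid only if the linear occurrences are held frozen at the true $\eta_{t,j}$ while the bilinear multiplier is the quantity being convex-combined between $0$ and $\eta_{max}$. I would verify that this asymmetric treatment is legitimate, which it is because $\phi$ carries the $a_3,a_5$ terms as constants, so the endpoints $\phi(0),\phi(\eta_{max})$ appearing in the SOC constraints use the same frozen $\eta_{t,j}$ as the midpoint $\phi(\eta_{t,j})$. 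A secondary sanity check is that $\eta_{max}>0$, so $\alpha$ is well defined, with the degenerate case $\eta_{t,j}=0$ covered directly by the first SOC constraint.
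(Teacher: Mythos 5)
Your proof is correct and takes essentially the same route as the paper's: both freeze all decision variables except the scalar multiplying the branch-sum terms, observe that the constraint is then a convex (norm-of-affine) function of that scalar, and conclude from the two endpoint SOC constraints at $0$ and $\eta_{max}$ that the inequality holds at the intermediate value $\eta_{t,j}\in[0,\eta_{max}]$. Your explicit bookkeeping via $z(\lambda)$ and $\phi(\lambda)$ simply spells out the paper's auxiliary result on $f(xy)$, including the point (left implicit there) that the linear occurrences $a_3\eta_{t,j}, a_5\eta_{t,j}$ stay frozen while only the bilinear multiplier is convex-combined.
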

\begin{proof}
First, we note the following auxilliary result: For a convex function $f(\cdot)$ and $x\in [x_{min},x_{max}]\subset \mathbb{R}$, if $f(x_{min}y)\leq 0$, $f(x_{max}y)\leq 0$, then $f(xy)\leq 0$ for any $x\in [x_{min},x_{max}]$. 
Now suppose that \eqref{eq:mm_cvx_tight} holds, and use the above result for an  arbitrary $\tilde{\eta}_{t,j}=\gamma \eta_{max}+(1-\gamma)0$ with $\gamma\in[0,1]$ to get:
\small
\begin{align*}
&a_{0}+a^{\top}_{1}\mathbf{h}^j_{t}-b\geq\\
&\left\Vert \begin{bmatrix}\boldsymbol{\Sigma}_t(\tilde{\mathbf{M}}^{j\top}_{t}a_{2}+\tilde{\eta}_{t,j}\sum_{b=1}^{|\mathcal{B}^\pi_{t,j} |-1}\bar{\mathbf{M}}^{j,b\top}_ta_{2}+a_3\eta_{t,j})\\ \boldsymbol{\Sigma}_{t,j}\mathbf{L}^{j\top}_{t}(\tilde{\mathbf{K}}^{j\top}_ta_4+\tilde{\eta}_{t,j}\sum_{b=1}^{|\mathcal{B}^\pi_{t,j} |-1}\bar{\mathbf{K}}^{j,b\top}_ta_4+a_5\eta_{t,j})\end{bmatrix}\right\Vert_2
\end{align*}
\normalsize
Since, $\tilde{\eta}_{t,j}$ is arbitrary, the desired result is obtained by setting $\tilde{\eta}_{t,j}=\eta_{t,j}$.
\end{proof}

With the new variable definitions, constraint \eqref{eq:mm_psum} takes the form $\sum_{j=1}^{J}p_{t,j}\Phi(\eta_{t,j})\geq 1-\epsilon$, which is convex in $\eta_{t,j}$, but difficult to enforce since $\Phi(\cdot)$ lacks a closed-form expression. For $\eta\in[0,\eta_{max}]$, we approximate $\Phi(\eta)$ by a concave function $\Psi(\eta)=\min_{i=1,..,\nu}\{ q^1_i\eta+q^0_i\}$ such that $\Phi(\eta)\geq \Psi(\eta)$, and replace \eqref{eq:mm_psum} with the convex inner-approximation:
\small
\begin{align}\label{eq:mm_psum_cvx}
&\sum_{j=1}^{J}p_{t,j}\Psi(\eta_{t,j})\geq 1-\epsilon
\end{align}
\normalsize
A candidate approximation of $\Phi(\cdot)$ over $[0,2]$ with $\nu=2$ affine functions is shown in Fig.~\ref{fig:Psi}
\begin{figure}[!h]
    \centering
    \includegraphics[width=.8\columnwidth]{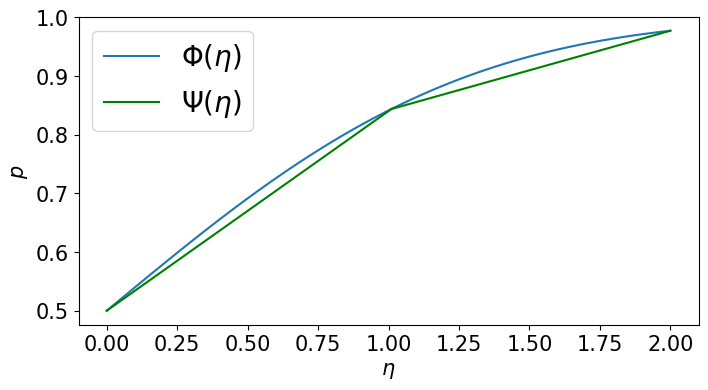}
    \caption{{\small{$\Psi(\eta)$ as a concave under-approximation of $\Phi(\eta)$}}}
    \label{fig:Psi}    
\end{figure}
Thus, the non-convex constraints \eqref{eq:mm_psum}, \eqref{eq:mm_affine_cc} can be replaced by the convex inner-approximations provided by  the Second-Order Cone (SOC) constraints \eqref{eq:mm_psum_cvx}, \eqref{eq:mm_cvx_tight}.
\subsection{SMPC Optimization Problem}
The cost \eqref{opt:obj} of the SMPC optimization problem \eqref{opt:SMPC_skeleton} can be chosen to penalise deviations of the EV state and input trajectories from the reference trajectory \eqref{eq:EV_ref_traj} as
\small
\begin{align*}
    C_t(\mathbf{x}_t,\mathbf{u}_t)=\sum\limits_{k=0}^{N-1}\Delta x_{k+1|t}^\top Q\Delta x_{k+1|t} + \Delta u_{k|t}^\top R\Delta u_{k|t}
\end{align*}
\normalsize
where $Q,R\succ 0$. Let $\boldsymbol{\Theta}^{\text{MPC}}_t(x_t,o_t; \mathcal{T}_t)$ denote the set of policy parameters and risk levels,  $$\boldsymbol{\theta}_t:=\{\mathbf{h}^j_t, \tilde{\mathbf{M}}^j_t, \tilde{\mathbf{K}}^j_t, \{\bar{\mathbf{M}}^{j,b}_t, \bar{\mathbf{K}}^{j,b}_t\}_{b=1}^{|\mathcal{B}^\pi_{t,j}|-1}, \eta_{t,j} \}_{j=1}^J$$ such that they satisfy \begin{enumerate}
    \item the reformulations \eqref{eq:mm_cvx_tight}, \eqref{eq:mm_psum_cvx} of the multi-modal state-input and collision avoidance constraints \eqref{eq:EV_cc}, \eqref{eq:mm_ca},
    \item structural constraints given by $\mathcal{T}^\pi_t$.
\end{enumerate}
Thus, the SMPC \eqref{eq:SMPC} for the EV can be synthesized by solving the SOCP:
{\small{\begin{align}\label{opt:SMPC}
\min_{\boldsymbol{\theta}_t}~~&\sum_{j=1}^J p_{t,j}\mathbb{E}\left[C_t(\mathbf{x}_{t,j},\mathbf{u}_{t,j})\right]\nonumber\\
\text{s.t}&~\boldsymbol{\theta}_t\in \boldsymbol{\Theta}^{\text{MPC}}_t(x_t,o_t;\mathcal{T}_t)
\end{align}}}
%

\section{Numerical Validation}
\label{sec:sim_implement}
In this section, we investigate the proposed algorithm, both qualitatively and quantitatively.
To assess the benefits of our proposed stochastic MPC formulation, we demonstrate our approach in three different scenarios: (A) \textit{Longitudinal control with a traffic light and a following vehicle}, (B) \textit{Unprotected left turn at an intersection}, and (C) \textit{Lane change on a straight road}. 
In scenario (A), we validate the proposed algorithm in a simple 1-D simulation and show the qualitative behavior of our SMPC in managing the multi-modal predictions. 
In scenarios (B) and (C), we use CARLA \cite{carla_sim_2017} for the simulator and adopt the motion predictor MultiPath \cite{multipath_2019} to predict the multi-modal future motions of surrounding vehicles. In these scenarios, we provide a quantitative study of our approach against baselines.
\subsection{Qualitative analysis: Longitudinal control with a traffic light and a following vehicle} \label{sec: validation A}
\subsubsection*{Setup}
Consider the situation in Fig.\ref{fig:TL}, where the EV is approaching a Traffic Light (TL) with a tailgating TV behind. All vehicles and the traffic light are simulated in a simple 1D simulator. Both vehicles are modelled as double integrators with Euler discretization (@$dt=0.1s$) as follows:

\begin{equation}
    {\small{\begin{aligned}
        & x_t = \begin{bmatrix} s_t & v_t \end{bmatrix}^\top, ~ u_t=a_t, \\
        & x_{t+1} = \begin{bmatrix} 1 & dt \\ 0 & 1 \end{bmatrix} x_t + \begin{bmatrix} 0.5 dt^2 \\ dt \end{bmatrix} u_t, \\
        & o_t = \begin{bmatrix} s^o_t & v^o_t \end{bmatrix}^\top, ~ u^o_t=a^o_t,  \\
        & o_{t+1} = \begin{bmatrix} 1 & dt \\ 0 & 1 \end{bmatrix} o_t + \begin{bmatrix} 0.5 dt^2 \\ dt \end{bmatrix} u^o_t + w^o_t, \\
        & w^o_t \sim \mathcal{N}(0, 0.6 I_2),
    \end{aligned}}}
\end{equation}
where each state comprises the longitudinal position and speed, each input is the acceleration, and $w^o_t$ is an additive process noise in the TV dynamics.
For our simulation, the initial states of the EV and the TV are set to $x_0=[0,\ 13.9],~ o_0=[-12.75, 14]$ 
so that 1) TV has a $0.7s$ time headway behind the EV, 2) there is enough distance for the EV to brake at $0.7g$ and stop at the TL, which is located $50m$ ahead of the EV, i.e., $s_f=50m$. 

The EV is subject to state-input constraints $\mathcal{X}\times\mathcal{U}=\{(x,u): v\in[v_{min}, v_{max}], a\in[a_{min},a_{max}]\}$ and collision avoidance constraints $\mathcal{C}=\{(x,o): s-o\geq d_{safe}\}$,  
where $v_{min}=0m/s,\ v_{max}=14m/s,\ a_{min}=-7m/s^{2},\ a_{max}=4m/s^2,\ d_{safe}=7m$.

For simplicity, we assume that the driver in the TV has good decision-making skills based on the previous observations. In particular, we assume the following: 
\begin{itemize}
    \item When the TV's driver is confident that the TL will remain yellow until the TV crosses with a probability of 1, the TV will maintain its speed.
    \item When the TV's driver is not confident that the TL will remain yellow, they will choose to brake. Here, we assume the probability of either red or yellow light is 0.5 conditioned on the TV choosing to brake.
    \item The TV's driver will make a decision when the TV passes a certain point, i.e., $s^o_t \geq s_{dec}$. 
\end{itemize}
Under this assumption, there are three possible modes:
\begin{itemize}
    \item mode $0$: TV keeps speed, TL stays yellow,
    \item mode $1$: TV brakes, TL stays yellow,
    \item mode $2$: TV brakes, TL goes red before EV crosses.
\end{itemize}
The EV does not know the true mode $\sigma \in \{0,1,2\}$. The EV estimates the probabilities of each mode as $p_t=[p_{t,0}, p_{t,1}, p_{t,2}]$ using Bayes' rule via observations of the TV's state history $o_{0},o_{1},..o_{t}$ after crossing $s_{dec}$. The tree structure $\mathcal{T}_t$ for the predictions is determined by rolling out the TV's acceleration commands and branching into the three modes based on when it crosses $s_{dec}$.

The chance constraints are imposed with risk level $\epsilon=0.01$. For deriving the variable-risk reformulations of multi-modal chance constraints \eqref{eq:EV_cc},\eqref{eq:mm_ca} as shown in Section \ref{ssec:mmreform}, we use $\Psi(\eta)$ as depicted in Fig.~\ref{fig:Psi} as the concave under-approximation of the CDF $\Phi(\eta)$ over $\eta\in[0,2]$. For stopping at $s_{f}$ in $mode\ 2$, we enforce the terminal constraint $\mathcal{X}_f=\{x=[s,v]\in\mathcal{X}: v^2\leq -2a_{min}(s_f-s) \}.$

The SMPC cost is given as $C_t(\mathbf{x}_{t,j},\mathbf{u}_{t,j})=\sum_{k=t}^{t+N} -Qs_{k+1|t,j}+ Ru^2_{k|t,j}$ over prediction horizon $N=12$ with $Q=10, R=20$ to penalize slow progress and control effort. 

\subsubsection*{Simulations}
We run two different simulation scenarios, where true modes are $\sigma=0$ and $\sigma=2$, respectively. Note that the EV does not know the true mode and estimates the mode after TV crosses the decision point $s_t^o \geq s_{dec}$.
We compare the proposed SMPC in \eqref{opt:SMPC} (\textit{Proposed} in Fig.~\ref{fig:TL_plot}) with an open-loop approach (\textit{OL} in Fig.~\ref{fig:TL_plot}), where the EV solves an SMPC with fixed-risk levels for each mode, and optimizes over a single open-loop sequence $\mathbf{h}_t$ i.e., the gains $K^j_{k|t}, M^j_{l,k|t}$ in \eqref{eq:policy} are eliminated. The results are illustrated in Fig.~\ref{fig:TL_plot}.

\begin{figure}[!h]
    \centering
    \includegraphics[width=1.\columnwidth]{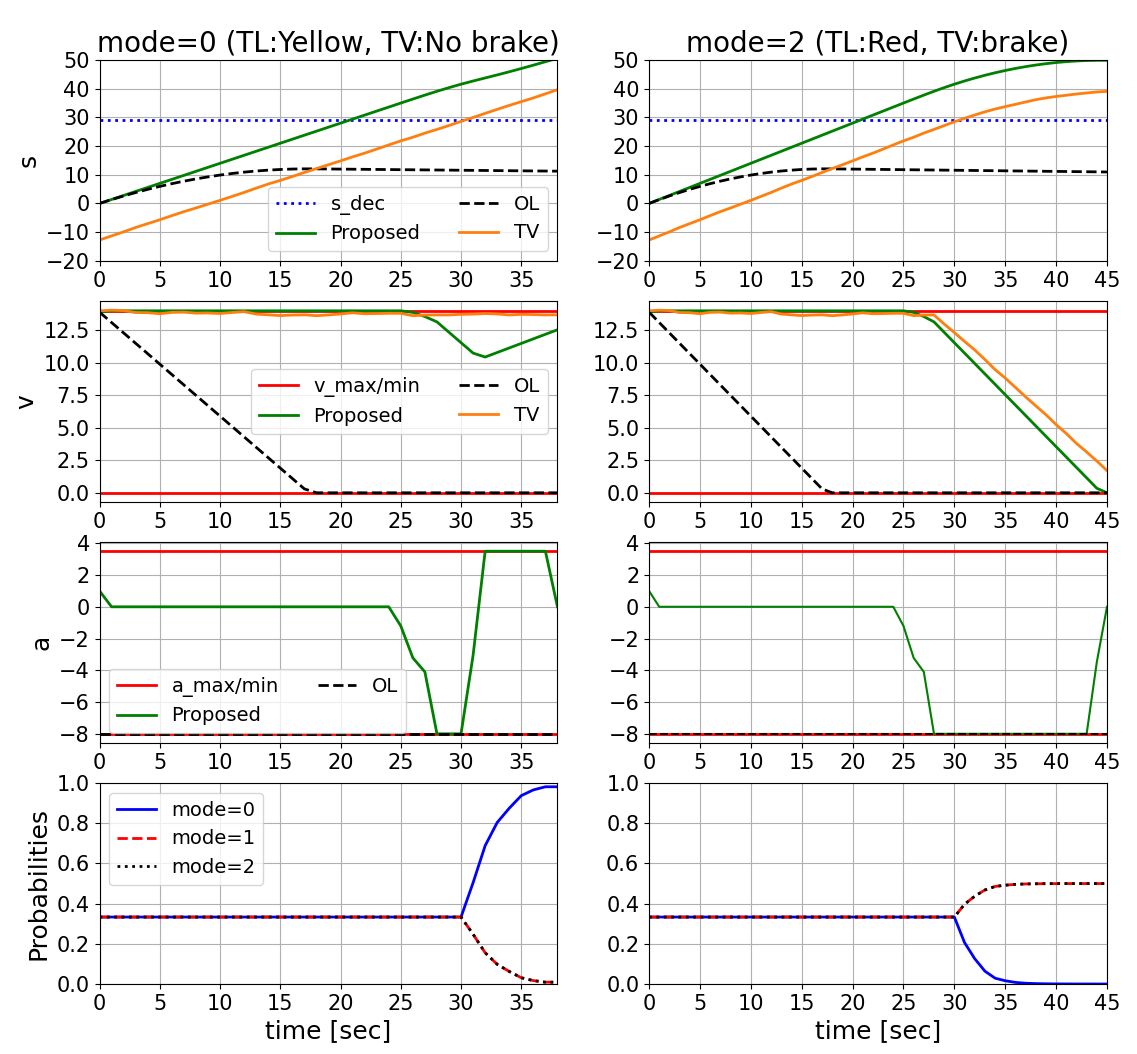}
    \caption{{\small{Closed-loop plots for the longitudinal control example for modes $\sigma=0,2$ by solving \eqref{opt:SMPC} for the SMPC. \textit{Proposed} is our proposed approach and \textit{OL} is the open-loop approach. The traffic light (TL) is located at $s_f = 50m$. The EV is unaware of the true mode $\sigma$, but  estimates the mode probabilities from TV observations. Using the proposed approach, the EV is able to cross the TL for $\sigma=0$ and safely stop before the TL for $\sigma=2$. Code: {\footnotesize{\url{https://github.com/shn66/AV_SMPC_Demos/tree/TL_eg}}}.}}}
    \label{fig:TL_plot}    
\end{figure}

\subsubsection*{Discussion} 
As depicted in Fig.~\ref{fig:TL_plot}, the open-loop approach is unable to exploit the mode probabilities and is infeasible throughout the simulation, leading to a collision with the TV.
In contrast, the proposed approach accordingly accelerates to cross the yellow TL in the first realization, while it decelerates to a stop in the second realization (without knowing the true mode $\sigma$) after the TV crosses the decision point $s_t^o \geq s_{dec}$ (around $t=30$ sec).
This closed-loop behavior results from the estimation of the mode probabilities and the incorporation of the multi-modal probability estimates in the chance constraints as in \eqref{eq:mm_psum_cvx}.
\subsection{Quantitative analysis: Unprotected left turn and Lane change with surrounding vehicles}\label{ssec:sim_2}

\begin{figure}[!h]
    \centering
    \begin{subfigure}{0.8\columnwidth}
    \includegraphics[width=\columnwidth]{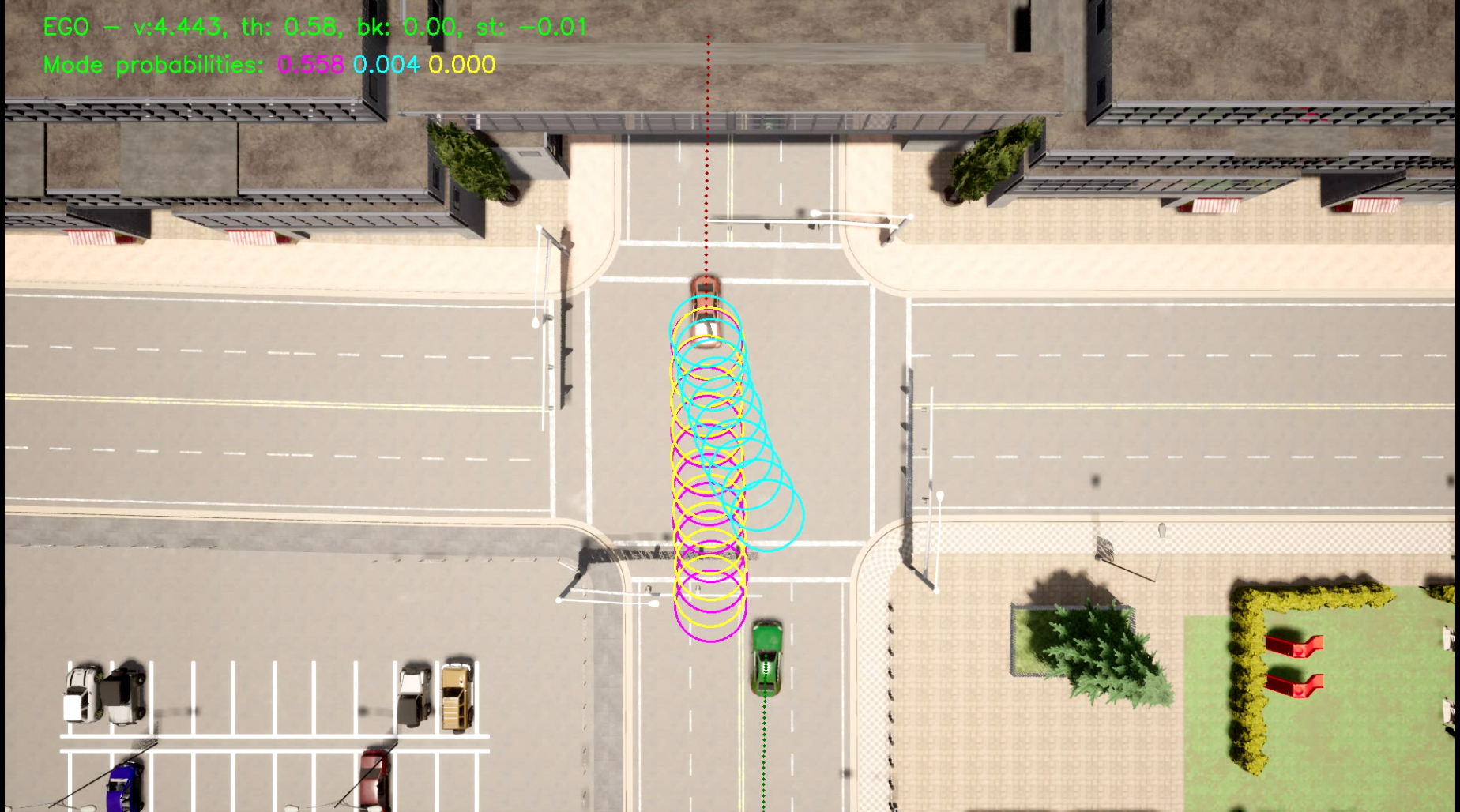} 
        \caption{Unprotected left}
        \end{subfigure}\hfill %
    \begin{subfigure}{0.8\columnwidth}
        \includegraphics[width=\columnwidth]{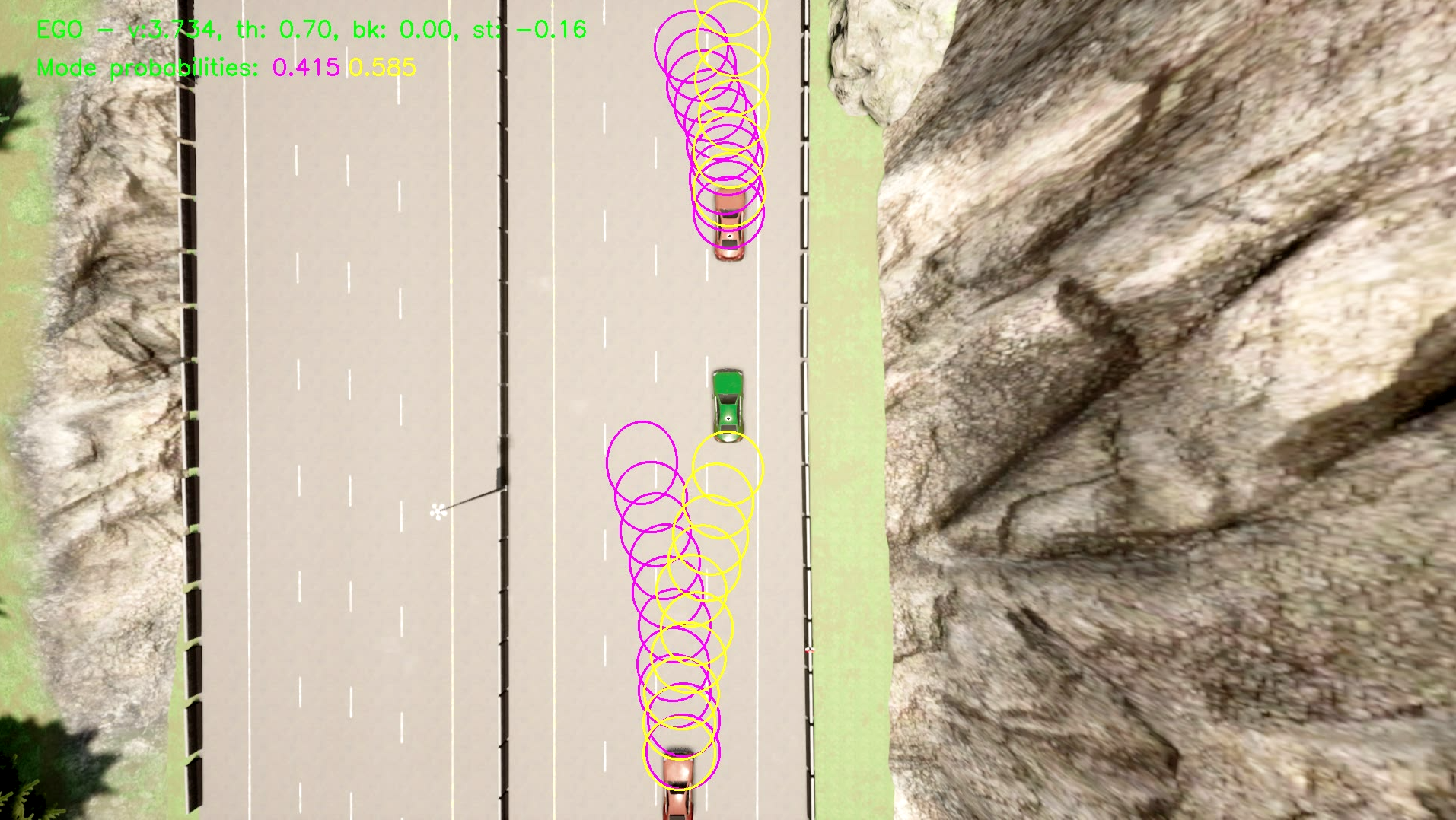} 
        \caption{Lane change}
        \end{subfigure}
    \caption{{\small{Carla simulation setup for unprotected left turn and lane change for EV (green) in the presence of TVs (orange) and multi-modal predictions (depicted by the ellipses). Code: \url{https://github.com/shn66/SMPC_MMPreds}  }}}\label{fig:scen_carla}    
\end{figure}
\begin{table*}[t!]
    \centering
    \caption{Closed-loop performance comparison across all scenarios. }  
    \label{tab:comparison_scenario_all}    
    \begin{tabular}[c]{|c |c | *{2}{c} | *{3}{c} | *{2}{c} | *{1}{c} |}
        \hline
        \multirow{2}{*}{\textbf{Scenario}} &
        \multirow{2}{*}{\textbf{Policy}} &
        \multicolumn{2}{c|}{\textbf{Mobility}} &
        \multicolumn{3}{c|}{\textbf{Comfort}} &
        \multicolumn{2}{c|}{\textbf{Safety}} &
        \multicolumn{1}{c|}{\textbf{Solver Performance}} \\
        & & \mobility & \comfort & \conservatism & \efficiency \\
        \hline
         \multirow{3}{*}{Unprotected left} & \textbf{OL}    & 1.53 & 1.96 & 1.31 & 8.09 & $\mathbf{6.62}$  & 81.14 & 3.88 &   $\mathbf{13.6}$ \\
        &\textbf{Fixed risk}        & 1.10 & 3.09 & 1.41 & 4.23 &  9.04  & 98.37 & 3.09 &  31.5 \\
        &\textbf{Proposed}        & $\mathbf{1.09}$ & $\mathbf{3.07}$ & $\mathbf{1.21}$ & $\mathbf{3.67}$ &  8.58 & $\mathbf{99.88}$ & 3.07  &  39.9 \\
        \hline
        \multirow{3}{*}{Lane change} & \textbf{OL}    & 1.32 & 7.71 & 8.16 & 4.41 & 6.45  & 82.55 & 3.42 &   $\mathbf{35.6}$ \\
        & \textbf{Fixed risk}        & 1.07 & 3.03 & 5.71 & 3.80 &  4.93  & 96.08 & 3.21 &  325.20 \\
        & \textbf{Proposed}        & $\mathbf{1.06}$ & $\mathbf{2.92}$ & $\mathbf{4.17}$ & $\mathbf{3.25}$ &  $\mathbf{3.27}$ & $\mathbf{98.76}$ & 3.19  &  397.19 \\
        \hline
    \end{tabular}
\end{table*}
\subsubsection*{Setup}
Consider the two scenarios as depicted in Fig.~\ref{fig:scen_carla}:
\begin{itemize}
    \item \textit{Unprotected left}: the EV makes a left turn through the intersection while avoiding an oncoming TV.
    \item \textit{Lane change}: the EV changes into the left lane in the presence of two TVs: One ahead in the same lane and another behind in the adjacent lane.
\end{itemize} 
All vehicles are simulated in a synchronous fashion in CARLA, ensuring that all processing (prediction, planning, and control) is complete before advancing the simulation.  Thus, our results only consider the impact of the control and not the time delays incurred in computation.

The TVs' controls are given by a simple nonlinear MPC to go straight, with a kinematic bicycle model for predictions and simple distance-based collision avoidance constraints. From the EV's perspective, the predictions of the TV's motion are given by a GMM for multi-modal trajectory predictions of the form \eqref{eq:TV_preds}. We obtain this multi-modal distribution along with mode probabilities online using the MultiPath prediction model from \cite{multipath_2019}. Given the predicted multimodal distributions for the TVs, we use the framework introduced in Section~\ref{sec:SMPC} to generate feedback control policies, and use Fig.~\ref{fig:Psi} for the CDF approximation. A dynamically feasible EV reference trajectory \eqref{eq:EV_ref_traj} is obtained by solving a nonlinear trajectory optimization problem for a kinematic bicycle using IPOPT~\cite{ipopt_2006} to track a high-level route (provided by the CARLA waypoint API). Given the EV reference \eqref{eq:EV_ref_traj} and TV predictions \eqref{eq:TV_preds}, the SMPC optimization problem \eqref{opt:SMPC} is solved using Gurobi \cite{gurobi} to compute the acceleration and steering controls.

\subsubsection*{Policies} 
We evaluate and compare the following set of policies for this unprotected left scenario: 
\begin{itemize}
    \item \textbf{Proposed}: Our proposed framework, given by solving \eqref{opt:SMPC} which optimizes over both, policies and risk levels for the multi-modal chance constraint \eqref{eq:mm_psum}, \eqref{eq:mm_affine_cc}.
    \item \textbf{Fixed risk}: An ablation of our approach, which optimizes over policies but with fixed risk levels $r_{t,j}=1-\epsilon$ for the multi-modal chance constraint \eqref{eq:mm_psum}, \eqref{eq:mm_affine_cc}.
    \item \textbf{OL}: An ablation of our approach, where the gains $K^j_{k|t}, M^j_{l,k|t}$ are eliminated and risk levels are fixed too.
\end{itemize}
Note that \textbf{Proposed} and \textbf{OL} in this section are the same algorithms that were compared in Sec. \ref{sec: validation A}.

\subsubsection*{Evaluation Metrics}
We introduce a set of closed-loop behavior metrics to evaluate the policies.  A desirable planning framework enables high mobility without being over-conservative, allowing the timely completion of the driving task while maintaining passenger comfort. The computation time should also not be exorbitant to allow for real-time processing of updated scene information.
The following metrics are used to assess these factors:
\begin{itemize} 
    \item \textbf{Mobility}: 1) $\tilde{\mathcal{T}}_{episode}$: Time the EV takes to reach its goal normalized by the time taken by the reference, and 2) $\Delta\tau$: Deviation (in Hausdorff distance) of the closed-loop trajectory from the reference trajectory, 
    A large deviation $\Delta\tau$ indicates a longer trajectory, leading to an extended time required to complete the maneuver. 
    \item \textbf{Comfort}: 1) $\tilde{\mathcal{A}}_{lat}$: Peak lateral acceleration normalised by the peak lateral acceleration in reference, 2) $\bar{\mathcal{J}}_{long}$: Average longitudinal jerk,  and 3) $\bar{\mathcal{J}}_{lat}$: Average lateral jerk.  High values are undesirable, linked to sudden braking or steering.
    \item \textbf{Safety}: 1) $\mathcal{F}$: Feasibility \% of the SMPC optimization problem. A high $\mathcal{F}$ value is desirable, as infeasibility of the SMPC can potentially lead to accidents. 
    2) $\bar{d}_{min}$: Closest distance between the EV and TV of each algorithm, provided that the algorithm remains feasible. A higher $\bar{d}_{min}$ indicates that the algorithm should be more conservative to maintain safety. This caution can lead to reduced feasibility when the algorithm encounters congested urban road driving scenarios.
    \item \textbf{Solver Performance}: 1) $\bar{\mathcal{T}}_{solve}$: Average time taken by the solver; lower is better.
\end{itemize}

\subsubsection*{Discussion}

Now, we present the results of the various SMPC policies. For each scenario, we roll out each policy for 50 different initial conditions by varying: 1) starting positions within $[-5 \text{m}, 5\text{ m}]$ and 2) nominal speed in $ [8m/s, 10m/s]$. For all the policies, we use a prediction horizon of $N=10$, a discretization time-step of $dt= 0.2 \text{ s}$, and a risk level of $\epsilon=0.02$ for the chance constraints in the SMPC.


The performance metrics, averaged across the initial conditions, are shown in Table~\ref{tab:comparison_scenario_all}.  In terms of mobility, \textbf{Proposed} is able to improve or maintain mobility compared to the ablations. There is a noticeable improvement in comfort and safety metrics, as the \textbf{Proposed} can stay close to the TV-free reference trajectory without incurring high acceleration/jerk and keeping a safe distance from the TV. \textbf{Proposed} was also able to find feasible solutions for the SMPC optimization problem more often our experiments because the formulation optimizes over policies and risk levels for the multi-modal constraints. Finally, we see that the \textbf{OL} is the fastest in solve time (because of the missing policy and risk variables). However, we see that introducing the additional risk level variables only marginally increases the solve time on comparing \textbf{Proposed} and \textbf{Fixed risk}. The higher solve times for the proposed approach in the lane change scenario are because of the additional TV and its associated multi-modal predictions. To remedy this issue for the hardware experiments in the next section, we use a multi-threaded implementation as described in Fig.~\ref{fig:planner_architecture} to solve the SMPC. 

The results highlight the benefits of optimizing over policies and incorporating the variable risk formulation in the SMPC formulation \eqref{opt:SMPC_skeleton} for the EV, towards collision avoidance with multi-modal predictions of the TV.

\section{Experimental Validation}
\label{sec:expt_design}
In this section, we validate our approach in hardware vehicle experiments to assess the benefits of our proposed stochastic MPC formulation. The experiment videos can be accessed at \url{https://shorturl.at/ctQ57}.

\subsection{Test scenario and key takeaways}
In the hardware experiment, we consider the same lane-change scenario introduced in Sec. \ref{ssec:sim_2}, wherein EV initiates a lane change maneuver with a leading TV ahead of the EV in the same lane and a trailing TV behind the EV in the adjacent lane as illustrated in Fig. \ref{fig:experiemnt_rfs_situation}.
In this scenario, EV predicts multi-modal behaviors of other TVs and tries to minimize the risk of collisions for every possible mode. As illustrated in Fig. \ref{fig:experiemnt_rfs_situation}, EV predicts two different modes of TVs: keeping their lanes or changing lanes. However, TVs will not change their lane until the end of this scenario.

We compare the proposed SMPC (\textbf{Proposed}) with the \textbf{OL}, introduced in Sec. \ref{ssec:sim_2}. 
The results in Fig. \ref{fig:x_cl_vs_ol} show that while the \textbf{OL} cannot find a feasible solution due to its conservativeness of the constraint tightening formulation, \textbf{Proposed} successfully accomplishes the given scenario without any collisions.
When \textbf{OL} problem becomes infeasible, we change the control policy \textbf{OL} to a backup control policy: keeping the current lane and decelerating mildly.
This abrupt policy change can deteriorate the comfort indices and it is more likely that tracking previous optimal trajectories from \textbf{OL} leads to smoother behaviors. However, due to safety concerns, we tried to stop the vehicle within the current lane. 

Furthermore, we study how the predicted mode probability of the leading TV affects the closed-loop behavior.
Compared to the case that the leading TV is likely to keep its lane ($p_\text{lk} = 0.9$), \textbf{Proposed} sets more margin in a lateral direction to avoid the collision in case the leading TV changes lanes ($p_\text{lk} = 0.1$) as illustrated in Fig. \ref{fig:x_prob_lklc}.
\begin{figure}[h]
    \centering    \includegraphics[width=0.70\columnwidth]{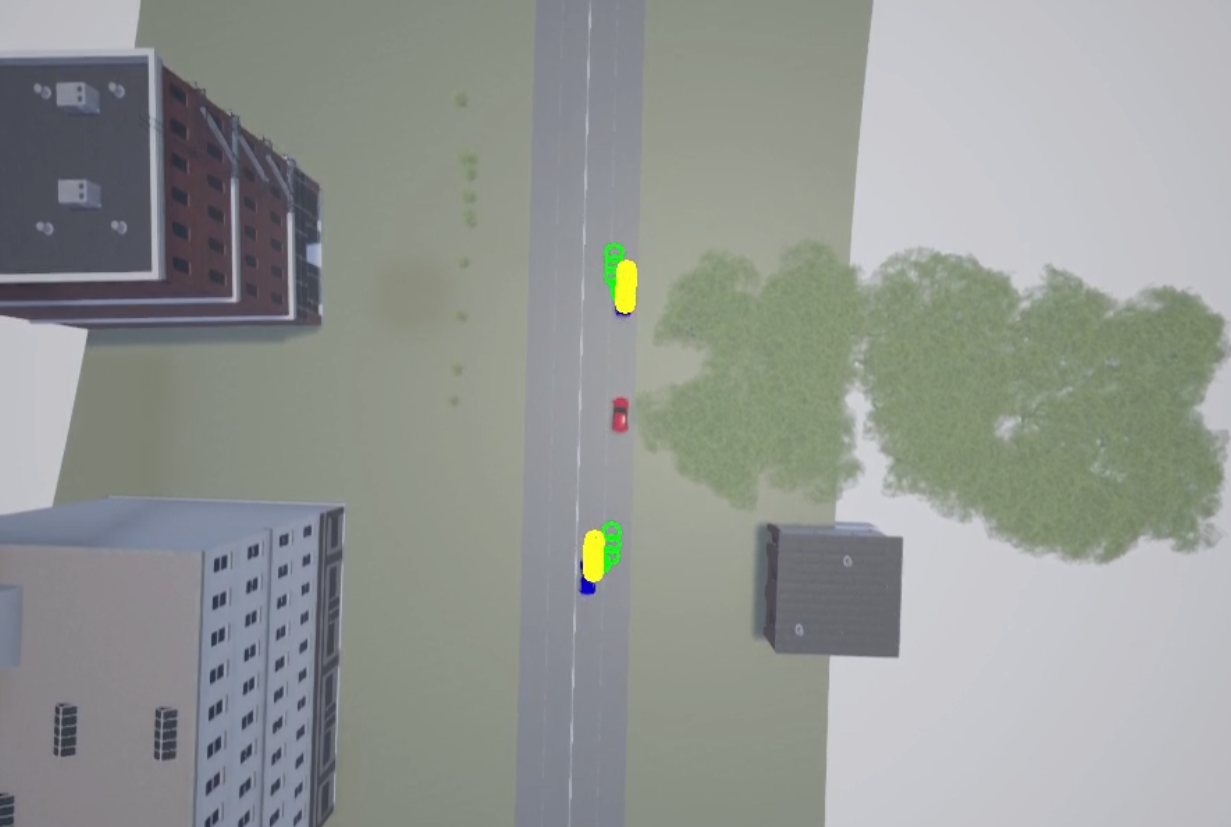}
    \caption{Drone view of the testing scenario including EV (red) and 2 TVs (blue) with predictions (yellow and green circles)}    \label{fig:experiemnt_rfs_situation}
\end{figure}

\subsection{Hardware Architecture for Experiments}


The test vehicle and the hardware setup are illustrated in Fig. \ref{fig:hardware_setup}.
The computing unit of the system consists of three computers: a Linux-based laptop, a Linux-based rugged computer, and the dSPACE MicroAutoBox II (MABXII).
The laptop is for simulating virtual environments and transmitting all information such as states of surrounding vehicles.
The rugged PC is for implementing a planning and control software stack that plans the ego vehicle's behavior, generates dynamically feasible, safe trajectories, and calculates acceleration and yaw rate to track the generated trajectories.
The MABXII is for implementing an actuator-level controller that calculates actuator control inputs and a fail-safe logic that provides safety features. 
\begin{figure}[h]
    \centering    \includegraphics[width=0.8\columnwidth]{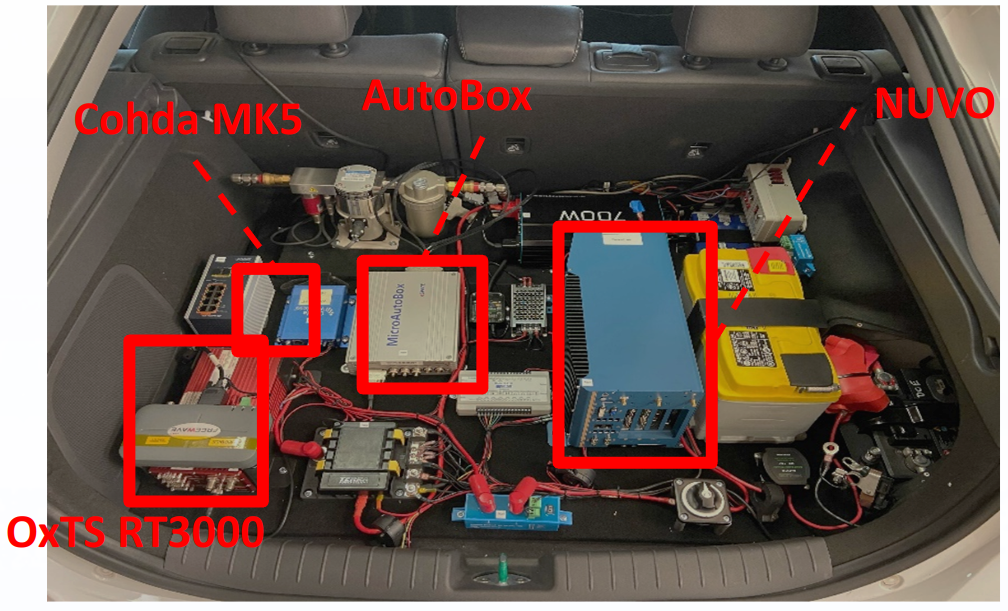}
    \caption{The hardware setup in the actual vehicle}
    \label{fig:hardware_setup}
\end{figure}

The sensors of the system are an OxTS RT3000: a differential GPS to localize the ego vehicle and production vehicle sensors to acquire vehicle state information. 

\subsection{Software Architecture for Experiments}
The overall block diagram of control architecture and the entire system is illustrated in Fig. \ref{fig:control_architecture}.
In the following subsections, we describe the comprehensive details of this experimental setup.
\begin{figure}[h!]
    \centering
    \vspace{-1.0em}
    \includegraphics[width=0.80\columnwidth]{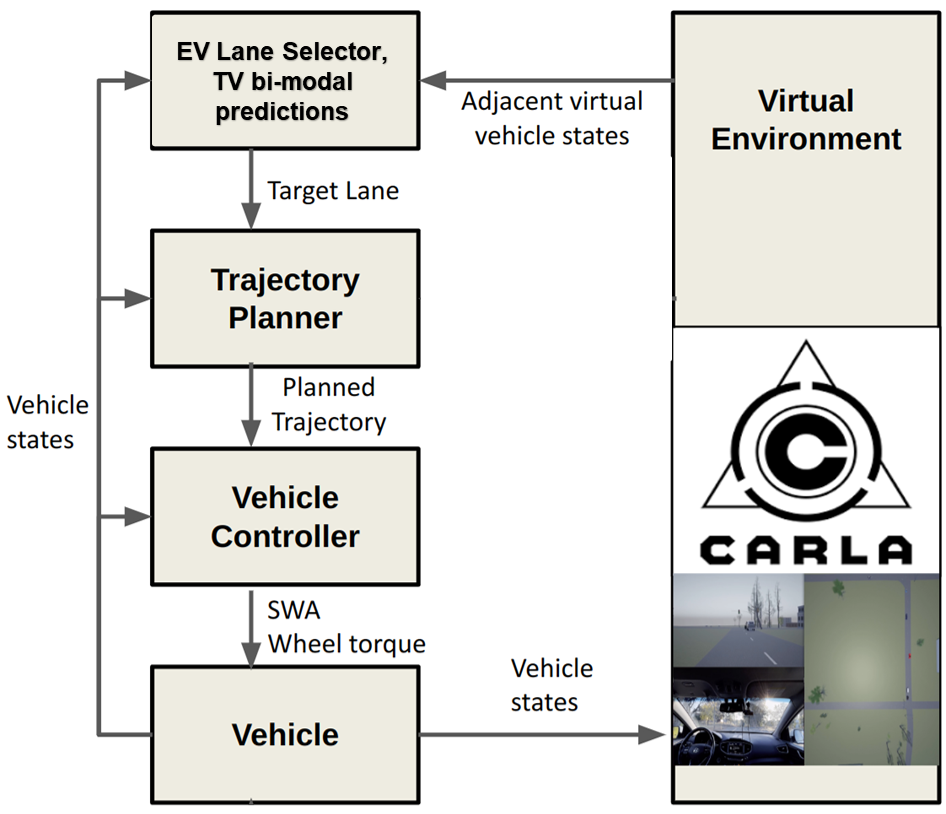}
    \caption{Diagram of Control Architecture}
    \vspace{-1.0em}
    \label{fig:control_architecture}
\end{figure}
\subsubsection{Planning and Control Software Stack}
The developed hierarchical control system consists of a lane selector, a trajectory planner, and a vehicle controller. 
First, the lane selector determines the target lane for the ego vehicle.
Second, given the target lane, the trajectory planner \cite{joa2023energy} generates a smooth, comfortable trajectory to the target lane by solving a nonlinear optimization problem. 
The calculated trajectory serves as the reference \eqref{eq:EV_ref_traj} for the proposed SMPC. Then the proposed SMPC \eqref{opt:SMPC}  is solved to determine acceleration and yaw rate commands that satisfy state/input constraints. A kinematic bicycle model \eqref{eq: bicycle model} is employed for the EV predictions in the SMPC optimization problem, which is modelled with CasADi and solved using Gurobi. For computing the SMPC commands at 10 Hz, we employ a multi-threaded architecture that computes the gains $K^{j}_{k|t}, M^j_{l,k|t}$ and feedforward terms $h^{j}_{k|t}$ at different frequencies as depicted in Fig.~\ref{fig:planner_architecture}.
\begin{figure}[h!]
    \centering
    \vspace{-1.0em}
\includegraphics[width=0.70\columnwidth]{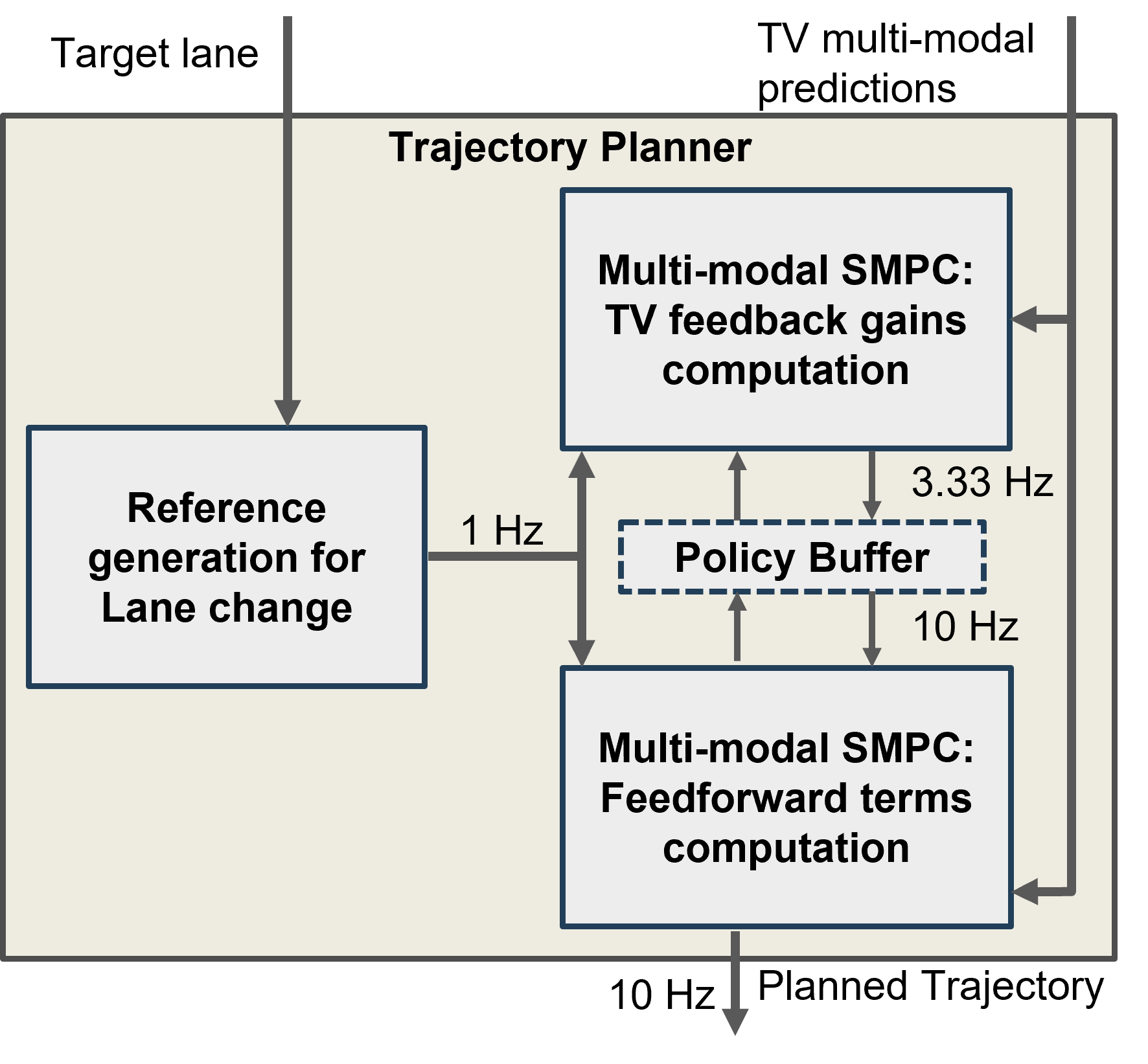}
\vspace{-1.0em}
    \caption{Trajectory planner architecture}
    \vspace{-1.0em}
    \label{fig:planner_architecture}
\end{figure}
Finally, the vehicle actuator controller calculates steering wheel angle and wheel torque commands from the optimal inputs of the proposed SMPC \eqref{opt:SMPC}.

\subsubsection{Virtual Environment Simulator (Digital Twin)}
To conduct real-world vehicle tests safely and efficiently, we employ a Vehicle-In-the-Loop (VIL) system as outlined in \cite{joa2023energy}, which integrates the operation of an actual vehicle with a virtual environment simulation.

The CARLA \cite{Dosovitskiy17} software is the primary simulator to build virtual environments and simulate a variety of scenarios with ease. 
The virtual environment simulator constructs all components such as road networks, other vehicles, traffic infrastructures, buildings, and so on to replicate the real-world map. Fig. \ref{fig:compare_map} shows the generated CARLA map, the satellite image of the testing site, and the actual test vehicle.

\begin{figure}[h]
    \centering    \includegraphics[width=0.85\columnwidth]{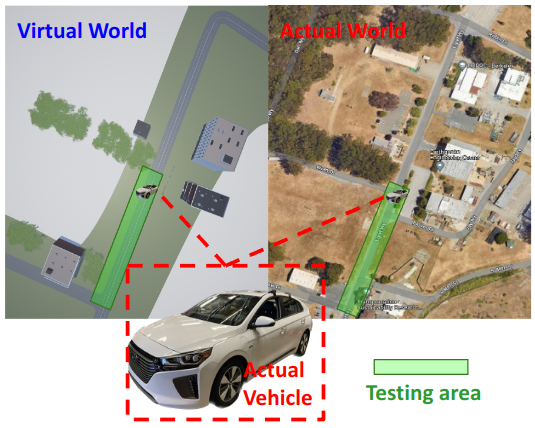}
    \caption{The CARLA image, the satellite image of the testing site (RFS) and the actual vehicle image}
    \label{fig:compare_map}
\end{figure}

On the customized map, the CARLA simulates a traffic scenario with the same initial conditions such as the number of spawned vehicles, the locations of the vehicles, etc. It is worth noting that the CARLA simulator exhibits inherent randomness in the motion of each virtual vehicle, resulting in variations in the resulting traffic scenario. 
We also synchronize the real world with the virtual world in terms of the physical EV. Based on the obtained coordinate data from dGPS/IMU sensors, the simulator generates an agent in the virtual world and teleports the vehicle by updating the position and orientation of the agent every time it receives data from the actual sensors.


\subsection{Experiment Results}
Our experiment setup parameters are described as follows. The EV initiates motion with an initial speed of zero. The leading TV begins its movement 10 meters ahead of the EV in the same lane, while the trailing TV starts from a position 25 meters behind the EV in the adjacent lane. Both Target Vehicles maintain a consistent average speed of $4$m/s. Each of the TVs operates in two distinct modes: Lane Keeping (LK) and Lane Change (LC). The trailing TV equally splits its mode probability, with a 50\% chance of LK and a 50\% chance of LC. To investigate the impact of varying mode probabilities, we change the probability of the mode chosen by the leading TV.

The conducted tests are executed at the Richmond Field Station (RFS), as illustrated in Fig. \ref{fig:compare_map}. The hardware experiments are primarily divided into two segments. The first segment involves a comparison between our approach (which optimizers over multi-modal policies and risk levels) against a baseline that only optimizes over multi-modal open-loop sequences with fixed risk levels for each mode (this is \textbf{OL} from Section \ref{ssec:sim_2}). The second segment focuses on assessing the behavioral outcomes resulting from alterations in the probability of the surrounding vehicle's lane change mode and lane keeping mode.

\subsubsection{\textbf{Proposed} vs \textbf{OL}}
Within the identical scenario, we conduct testing using two distinct control policies: \textbf{Proposed} in Sec. \ref{ssec:sim_2} and \textbf{OL} in Sec. \ref{ssec:sim_2}.
When the SMPC problem becomes infeasible, a lane-keeping controller with mild braking takes over the control. 
In an ideal practical application, a human driver should take over the control but due to our hardware limitations, we utilize the lane-keeping controller as a backup controller.

Fig. \ref{fig:x_cl_vs_ol} presents the closed-loop behaviors: a lateral error $e_y$ and a heading error $e_{\psi}$ with respect to a centerline, vehicle speed, steering wheel angle, and longitudinal acceleration. 
The graphs clearly illustrate that the \textbf{OL} yields infeasible solutions, leading to abrupt and undesirable motions with constraint violations. Conversely, the \textbf{Proposed} consistently generates feasible solutions, facilitating smooth motions in accordance with the predefined constraints.

\begin{figure}[h!]
    \centering
    \includegraphics[width=0.9\columnwidth]{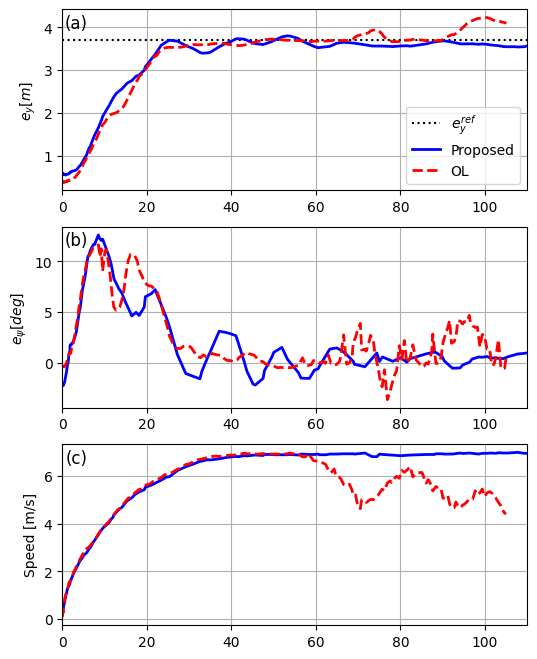}
    \\
\includegraphics[width=0.9\columnwidth]{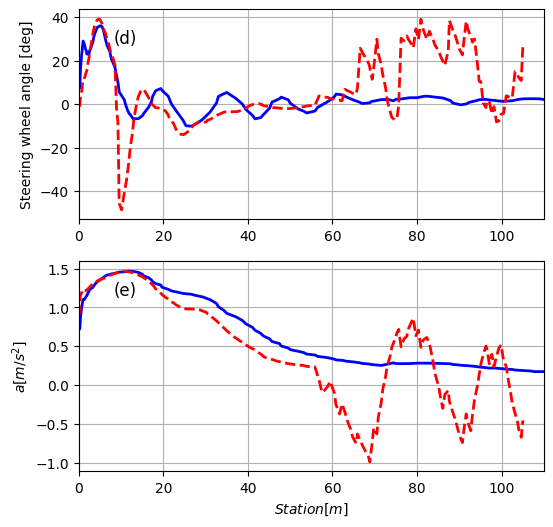}
    \caption{{\small{Comparison: \textbf{Proposed} vs \textbf{OL}. (a) Lateral error with respect to the centerline of the original lane. $e_y^\text{ref}$ denotes the reference, (b) Heading error with respect to the centerline, (c) Vehicle longitudinal speed, (d) Steering wheel angle, and (e) Longitudinal acceleration.
    \textbf{Proposed} makes the ego vehicle keep the lateral distance ($e_y$) close to the reference while satisfying the multi-modal collision avoidance constraints. In contrast, the \textbf{OL} becomes infeasible during the task.}}}
    \vspace{-2.0em}
    \label{fig:x_cl_vs_ol}
\end{figure}


\subsubsection{Change of mode probability}
In the second part, we proceed with testing under the \textbf{Proposed} while varying the lane-keeping probability of the leading TV. 
Specifically, we compare the case that the leading TV is likely to keep its lane ($p_\text{lk} = 0.9$) with the case that the leading TV changes lanes ($p_\text{lk} = 0.1$).
In Fig. \ref{fig:x_prob_lklc}'s lateral error ($e_y$) graph, it is evident that the lateral distance is greater in the scenario with a lower lane-keeping probability. This observation aligns with the intuitive analysis that the resulting control policy prioritizes the lane change maneuver of the leading vehicle to the lane currently occupied by the EV. Due to the presence of collision avoidance constraints, the EV endeavors to evade the anticipated trajectory of the leading TV by maintaining larger lateral safety margins.

\begin{figure}[h!]
    \centering
    \includegraphics[width=0.9\columnwidth]{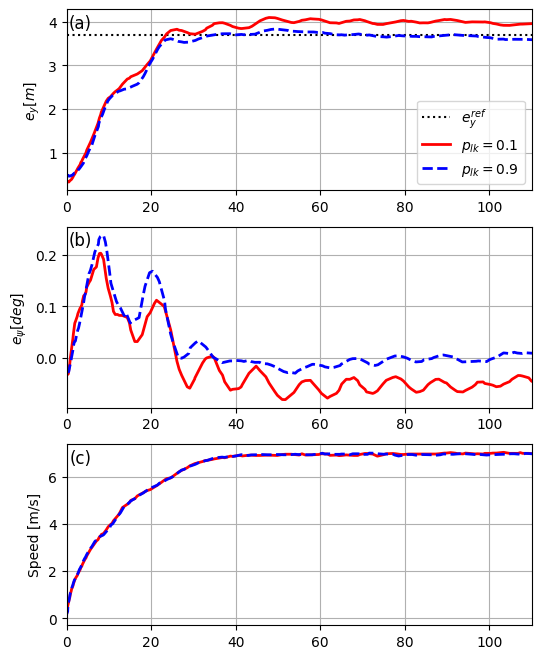}
    \\
    \includegraphics[width=0.9\columnwidth]{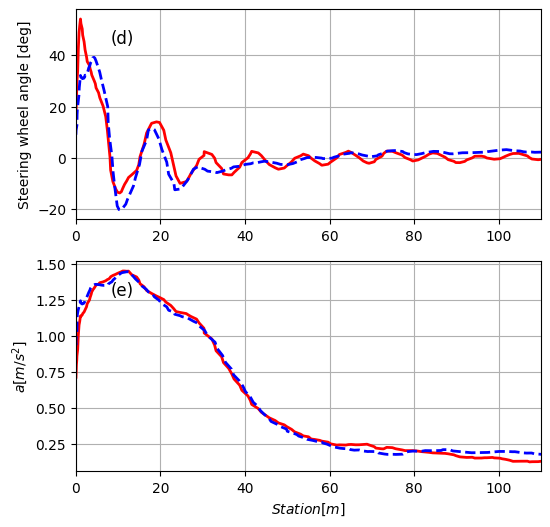}
    \caption{\small{Comparison: $p_\text{lk} = 0.1$ vs $p_\text{lk} = 0.9$}. (a) Lateral error with respect to the centerline of the original lane. $e_y^\text{ref}$ denotes the reference, (b) Heading error with respect to the centerline, (c) Vehicle longitudinal speed, (d) Steering wheel angle, and (e) Longitudinal acceleration. Compared to the case that the leading TV is likely to keep its lane ($p_\text{lk} = 0.9$), \textbf{Proposed} sets more margin in a lateral direction to avoid the collision in case the leading TV changes lanes ($p_\text{lk} = 0.1$)}
    \vspace{-2.0em}
    \label{fig:x_prob_lklc}
\end{figure}


\section{Conclusion}
\label{sec:conclusion}
We proposed a Stochastic MPC formulation for autonomous driving with multi-modal predictions of surrounding vehicles. We provide a convex formulation for simultaneously (1) optimizing over parameterized feedback policies and (2) allocating risk levels to each mode for multi-modal chance constraint satisfaction. This enhances the feasibility and closed-loop performance of the SMPC algorithm, as demonstrated by our simulations and hardware experiments. 

\bibliographystyle{IEEEtran}
\bibliography{references.bib}

\appendix
\subsection{Matrix definitions}\label{app:matrices}
  \begingroup
  \allowdisplaybreaks
  \footnotesize 
\begin{align}
    &\mathbf{h}_t^j=[h^{j\top}_{t|t}\dots  h^{j\top}_{t+N-1|t} ]^\top\label{mat:h}\\
&\mathbf{K}_t^j=\text{blkdiag}\left(K^j_{t|t},\dots, K^j_{t+N-1|t}\right)\label{mat:K}\\
    &\mathbf{M}_t^{j}=\begin{bmatrix}
    O&\hdots&\hdots&\hdots& O\\M^j_{t,t+1|t}&O&\hdots&\hdots& O\\
    \vdots&\vdots&\vdots&\vdots\\
    M^j_{t, k|t}&\hdots M^j_{k-1, k|t}&O&\hdots& O\\
    \vdots&\vdots&\vdots&\vdots\\
    M^{j}_{t,t+N-1|t}&\hdots &\hdots& M^{j}_{t+N-2,t+N-1|t}&O
    \end{bmatrix}\label{mat:M}\\
    &\mathbf{A}_t=\begin{bmatrix}I_4\\ A_{t|t}\\\vdots\\\prod\limits_{k=t}^{t+N-1}A_{k|t}\end{bmatrix},  \mathbf{B}_t=\begin{bmatrix}O&\hdots&\hdots& O\\B_{t|t}&O&\hdots&O\\\vdots&\ddots&\ddots&\vdots\\\prod\limits_{k=t+1}^{t+N-1}A_{k|t}B_{t|t}&\hdots&\dots&B_{t+N-1|t}\end{bmatrix},\label{mat:AB}\\
    &\mathbf{T}^j_t=\begin{bmatrix}I_2\\ T_{t|t,j}\\T_{t+1|t,j}T_{t|t,j}\\\vdots\\\prod\limits_{k=t}^{t+N-1}T_{k|t,j}\end{bmatrix}, \mathbf{C}^j_t=\begin{bmatrix}O\\c_{t|t,j}\\c_{t+1|t,j}+T_{t+1|t,j}c_{t|t,j}\\\vdots\\c_{t+N-1|t,j}+\sum\limits_{k=t}^{t+N-1}\prod\limits_{l=k+1}^{t+N-1}T_{l|t,j} c_{k|t,j}\end{bmatrix}\label{mat:TC}\\
    &\begin{aligned}\mathbf{E}_{t}=\begin{bmatrix}O&\hdots&\hdots& O\\I_4&O&\hdots&O\\A_{t+1|t}&I_4&\hdots&O\\\vdots&\ddots&\ddots&\vdots\\\prod\limits_{k=t+1}^{t+N-1}A_{k|t}&\hdots&\dots&I_4\end{bmatrix},\\  \mathbf{L}^j_{t}=\begin{bmatrix}O&\hdots&\hdots& O\\I_2&O&\hdots&O\\T_{t+1|t,j}&I_2&\hdots&O\\\vdots&\ddots&\ddots&\vdots\\\prod\limits_{k=t+1}^{t+N-1}T_{k|t,j}&\hdots&\dots&I_2\end{bmatrix}\end{aligned} \label{mat:EL}\\
    &\boldsymbol{\Sigma}_w=I_N\otimes \Sigma_w,\  \boldsymbol{\Sigma}^j_n=\text{blkdiag}(\Sigma_{t|t,j},\dots,\Sigma_{t+N-1|t,j}) \label{mat:Sigma}\\
    &\mathbf{Q}=I_{N+1}\otimes Q,\ \mathbf{R}=I_N\otimes R,\label{mat:QR}
    \end{align}
\normalsize 
\endgroup

\end{document}